\newtheorem{definition}{Definition}
\newtheorem{strategy}{Strategy}    % gws
\newtheorem{theorem}{Theorem}
\newenvironment{proof}{\begin{IEEEproof}}{\end{IEEEproof}}
\newtheorem{upper bound}{Upper bound}
\newcommand\MYhyperrefoptions{bookmarks=true,bookmarksnumbered=true,
	pdfpagemode={UseOutlines},plainpages=false,pdfpagelabels=true,
	colorlinks=true,linkcolor={blue},citecolor={blue},urlcolor={blue},
	pdftitle={Discovering Utility-driven Interval Rules},%<!CHANGE!
	pdfsubject={Typesetting},%<!CHANGE!
	pdfauthor={ChunKai Zhang},%<!CHANGE!
	pdfkeywords={artificial intelligence, knowledge discovery, utility mining, interval rule, interval event}}%<^!CHANGE!
\begin{document}
%
% paper title

\title{Discovering Utility-driven Interval Rules}

% author names and IEEE memberships
\author{Chunkai Zhang, Maohua Lyu, Huaijin Hao, Wensheng Gan*, and Philip S. Yu,~\IEEEmembership{Life Fellow,~IEEE}
	
	\IEEEcompsocitemizethanks{\IEEEcompsocthanksitem Chunkai Zhang, Maohua Lyu, and Huaijin Hao are with the School of Computer Science and Technology, Harbin Institute of Technology (Shenzhen), Shenzhen 518055, China. (E-mail: ckzhang@hit.edu.cn, 21s151083@stu.hit.edu.cn, 22s151144@stu.hit.edu.cn)

	\IEEEcompsocthanksitem Wensheng Gan is with the College of Cyber Security, Jinan University, Guangzhou 510632, China. (E-mail: wsgan001@gmail.com) 
	
	\IEEEcompsocthanksitem Philip S. Yu is with the Department of Computer Science, University of Illinois at Chicago, IL, USA. (E-mail: psyu@uic.edu)}

	\thanks{Corresponding author: Wensheng Gan}
	% \thanks{Manuscript received August 31, 2021; revised August 26, XXXX.}
}

\IEEEtitleabstractindextext{%

\begin{abstract}
    For artificial intelligence, high-utility sequential rule mining (HUSRM) is a knowledge discovery method that can reveal the associations between events in the sequences. Recently, abundant methods have been proposed to discover high-utility sequence rules. However, the existing methods are all related to point-based sequences. Interval events that persist for some time are common. Traditional interval-event sequence knowledge discovery tasks mainly focus on pattern discovery, but patterns cannot reveal the correlation between interval events well. Moreover, the existing HUSRM algorithms cannot be directly applied to interval-event sequences since the relation in interval-event sequences is much more intricate than those in point-based sequences. In this work, we propose a utility-driven interval rule mining (UIRMiner) algorithm that can extract all utility-driven interval rules (UIRs) from the interval-event sequence database to solve the problem. In UIRMiner, we first introduce a numeric encoding relation representation, which can save much time on relation computation and storage on relation representation. Furthermore, to shrink the search space, we also propose a \textit{utility complement pruning} strategy, which incorporates the utility upper bound with the relation. Finally, plentiful experiments implemented on both real-world and synthetic datasets verify that UIRMiner is an effective and efficient algorithm. 
\end{abstract}

% Note that keywords are not normally used for peerreview papers.
\begin{IEEEkeywords}
 artificial intelligence, knowledge discovery, utility mining, interval rule, interval event.
\end{IEEEkeywords}}

% make the title area
\maketitle

\IEEEdisplaynontitleabstractindextext

\IEEEpeerreviewmaketitle

%%%%%%%%%%%%%%%%%%%%%%%%%%%%%%%%%%%%%%%
\IEEEraisesectionheading{
\section{Introduction}}
\label{sec:introduction}

\IEEEPARstart{S}{equential} pattern mining (SPM) \cite{agrawal1995mining, fournier2017surveys, gan2019survey, 2004Mining} is a research field for the discovery of underlying, valuable, and significant knowledge from a sequence of events database according to the support of patterns, where support refers to the frequency of a pattern occurring in the database. Correspondingly, the SPM has been used in a variety of real-world applications, including market analysis \cite{srikant1996mining}, bio-informatics \cite{wang2007frequent}, and web page click analysis \cite{fournier2012using}. To further apply these discovered sequential patterns, sequential rule mining (SRM) has been developed well in the last decades, which takes into account the probability that a pattern will be followed \cite{fournier2015mining, fournier2012cmrules, zaki2001spade}. A sequential rule (SR) $r$ consists of $X$ $\rightarrow$ $Y$, where $X$ is the antecedent of $r$ and $Y$ is the consequent of $r$.  To find SRs, two measures are generally used: support and confidence. The support of a SR $r$ $=$ $X$ $\rightarrow$ $Y$ is how many sequences in a sequential database that $r$ appears. The confidence of a SR $r$ is the support of the rule divided by the number of sequences containing the sequence $X$. Sequential rules can provide more semantic information compared to sequential patterns since they can be understood as the conditional probability $P$($Y$$|$$X$), as SR provides an explicit prediction probability compared to sequential patterns when the patterns are used for sequence prediction. SRM is widely applied to various applications, such as alarm sequence analysis \cite{ccelebi2014alarm} and restaurant recommendation \cite{han2013mining}.

The above algorithms for SPM and SRM are applied to sequence databases composed of point-based events occurring in an instant, such as the sequences of transaction records. However, event occurrences are generally associated with time intervals, which can be named interval-based events, temporal events, or interval events. For example, in a rental field, goods or services are rented out for a period of time (or interval duration); in the medical field, the symptoms of the diseases and medical treatments tend to persist for periods; in the smart home, the appliances run for periods. An interval event pattern consists of several interval events and their temporal relation in terms of Allen’s temporal logic \cite{allen1983maintaining} due to varying lengths of time intervals of events. There are seven possible temporal relations between a pair of interval events \cite{allen1983maintaining}, which can introduce richer information to help us better understand the underlying information in rich data. Therefore, it is necessary to explore the knowledge discovery of interval-based events.

Temporal pattern mining (TPM) is a way to discover all frequent interval-based sequences, named interval patterns or temporal patterns \cite{kam2000discovering, khoshnevisan2018recent, lee2020mining, novitski2020all, sharma2018stipa, xie2018mining}. For example, in a rental field, we discover two interval patterns: a 2-temporal pattern \{A, B\} with relation A \textbf{contain} B and a 3-temporal pattern \{A, B, D\} with relation A \textbf{contain} B, A \textbf{overlaps} D, and B \textbf{before} D, where renting good A is a car, B is GPS, and D is a room. For the 2-temporal pattern, we know that when renting a car for a period of time, some customers will rent GPS. For the 3-temporal pattern, we can discover that during renting a car for a period of time, a customer will rent GPS, and after the rental period GPS, he will rent a room. However, we do not know the proportion of these customers renting a room after the rental period of GPS among those customers that rent GPS while renting a car. That is, we can only know approximately the predicted interval events with temporal relations, but we cannot give the probability of the interval events occurring.

Interval rule mining (IRM) and temporal rule mining were proposed to tackle such tasks. Hoppner \cite{hoppner2001learning} presents the issue of discovering interval rules from a state sequence, utilizes a square matrix $R$ whose element $R$[$E_i$, $E_j$] denotes the relation between state intervals $E_i$ and $E_j$, and designs an Apriori-like algorithm \cite{agrawal1994fast} to find out rules from an interval-based event database. ARMADA \cite{winarko2007armada} adopts a memory-based indexing algorithm based on MEMISP \cite{lin2002fast} to extract temporal patterns and interval rules. However, these algorithms focus on the interval rule and do not provide an efficient mining algorithm. They all employ a two-stage method, which first finds out all frequent temporal patterns and then generates interval rules. It may not be efficient since the algorithms will waste a lot of time and memory generating many temporal patterns that are irrelevant to the interval rules. Therefore, mining interval rules from interval events remains a challenging problem. Besides, they are support-based mining algorithms, which only consider the temporal relations of interval events in an interval rule and check the interval rule's frequency. They neglect the length of the duration of each interval event in an interval rule and user preferences on interval events. As illustrated in Fig. \ref{Introexample}, for instance, in rental services, there are two interval event sequences, $S_1$ and $S_2$. $S_1$ contains two interval events A and B with relation A \textbf{contain} B. In $S_2$, there are three interval events A, B, and D with relations A \textbf{contain} B, A \textbf{before} D, and B \textbf{before} D. A lasts for 0.5 hours and B for 0.3 hours in $S_1$, while A, B, and D last respectively for 5, 4, and 10 hours in $S_2$. Obviously, the interval events occurring in $S_2$ can bring high profit to a rental company, but IRM treats \{A, B\} with relation A \textbf{contain} B as having the same importance. IRM will output two rules: $r_1$ = \{A\} $\rightarrow$ \{B\} with relation A \textbf{contain} B and $r_2$ = \{A, B\} $\rightarrow$ \{D\} with relations A \textbf{contain} B, A \textbf{before} D, and B \textbf{before} D if the minimum support is 2 and the minimum confidence is 0.5. However, $r_1$ in some E-sequences cannot bring high profit, such as $S_1$ in Fig. \ref{Introexample}. Thus, IRM cannot reflect the rules that users are really interested in.

\begin{figure}[h]
    \centering
    \includegraphics[width=0.94\linewidth]{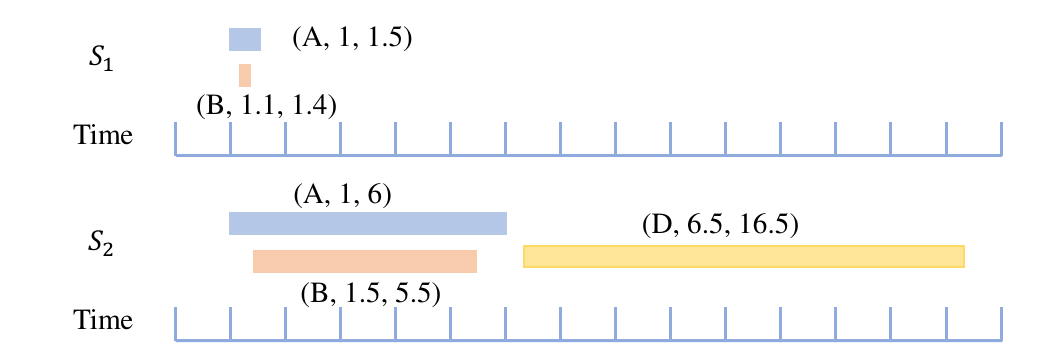}
    \caption{The different importance of the same relation in interval events.}
    \label{Introexample}
\end{figure}

Utility-driven interval rule mining can consider the duration of interval events and the user's preferences to derive interval rules that are of great interest to the user. Like in the example above, we can find a high-utility interval rule $r_2$ with a confidence of 0.5 and filter low-utility rule $r_1$. Based on interval events A \textbf{contain} B, we can infer that an interval D will occur with a probability of 0.5 in the future. Thus, the user can recommend interval event D after A and B for a high profit. Although there are several works \cite{huang2019mining, wang2020mining, mirbagheri2021mining} related to utility-driven interval pattern mining, they cannot discover interval rules directly unless using a two-stage mining approach to extract interval rules, resulting in lots of irrelevant patterns generated and wasting much of time. Besides, the existing IRM algorithms \cite{hoppner2001learning, winarko2007armada} cannot perform utility-driven interval rule mining. They do not consider utility for mining, such that the pruning strategies cannot work in utility-driven scenarios, which makes the existing IRM algorithms inapplicable to the utility domain. In addition, existing IRM algorithms use a matrix for relation representation \cite{hoppner2001finding}, which requires $O$($n^2$) time complexity to compute the relations and $O$($n^2$) space complexity for relation storage between interval events. The existing utility-driven SRM algorithms \cite{zida2015efficient, huang2021us, zhang2022totally} only consider sequential relations based on point-based events and do not take into account the temporal relations between interval events. Besides, the pruning strategies used in utility-driven SRM couldn't efficiently handle a sea of candidate interval rules when an interval rule was extended.

For the need to exploit utility-driven interval rule mining, we propose an algorithm called UIRMiner. In order to quickly calculate the relations between interval events and compress the file size of the output results, we first propose a numerical encoding relation representation that reduces the time required for relation computation while saving significant storage space. In utility-driven mining, the design of the pruning strategy is usually related to utility upper bounds according to the corresponding remaining utility. In UIRMiner, we utilize the utility upper bounds \textit{LERSPEU} and \textit{RERSPEU} introduced in \cite{zhang2022totally}. In addition, according to the feature of relation, we design a novel pruning strategy called utility complement pruning. As the rules have two evaluation conditions: utility and confidence, we also use confidence to prune. Different from the data structure, utility table, widely used in utility-driven SRM \cite{zida2015efficient, huang2021us, zhang2022totally}, we also redesign the data structures and the corresponding projected database. To summarize, the major contributions of this paper are listed below:

\begin{itemize}
    \item	We formulate the problem of discovering interval rules and propose a novel algorithm, namely UIRMiner, that can efficiently find all interval rules in a given database.
 
    \item 	A numerical encoding relation representation is introduced. It can compress the relations and reduce the storage size of the output file. When extending an interval, it can determine the relations quickly.

    \item   For shrinking search space, we also introduce a novel pruning strategy namely the utility complement pruning strategy. With this pruning strategy and the existing pruning strategies, we can efficiently discover all interval rules.
 
    \item 	A lot of experiments are conducted on both real and synthetic datasets. The experimental results show that our method is effective and much more efficient than those without any optimization methods.
\end{itemize}

The remaining parts of this paper are as follows: We review the related work about TPM, utility-driven SPM, and utility-driven SRM in Section \ref{sec:relatedwork}. In Section \ref{sec:preliminaries}, we introduce the necessary definitions and the problem statement. The specific method and the strategies are given in Section \ref{sec:method}. We analyze the experimental results in Section \ref{sec:experiments}. In Section \ref{sec:conclusion}, the conclusions of this paper are presented.

\section{Related work}  \label{sec:relatedwork}

In this section, we review the former work on interval mining, utility-driven pattern mining, and utility-driven rule mining, which are most relevant to this paper.

\subsection{Interval mining}

Interval pattern mining was proposed by Kam \textit{et al.} \cite{kam2000discovering}. In the early stages of the development of interval pattern mining, the research mainly focused on the representation \cite{wu2007mining, hoppner2001finding, chen2015mining} of intervals based on Allen's relations \cite{allen1983maintaining}. There are two types of relation representation: matrix representation and endpoint representation. Usually, matrix representation is the most intuitive way to express relations. Relations between any two intervals in an interval event sequence are stored in a matrix. Thus, there are many algorithms \cite{moskovitch2015fast, lee2020mining, sharma2018stipa, lee2020z} that improve the efficiency of mining interval patterns using a matrix representation. They take advantage of the transitivity property of the relation between interval events to reduce candidate generation. However, it will cost \textit{O}($n^2$) time complexity to compute the relations and take \textit{O}($n^2$) space complexity to store them, where $n$ is the number of intervals in an interval pattern. For mining like point-based sequences, endpoint representation has also been well-developed \cite{wu2007mining, chen2015mining}. Endpoint representation transforms each interval into an endpoint pair according to the timestamp of the corresponding interval's endpoint and forms point-based sequences. Like point-based sequences, endpoint representation does not have to keep track of the relationships between intervals. This means that the pruning strategies used for sequential pattern mining can be used for endpoint representation interval events. However, the search space for endpoint representation will increase dramatically since each interval owns two endpoints. Coincidence event-set representation (CER) \cite{mirbagheri2020high} is also a variant of endpoint representation. CER will divide interval events into sub-interval events according to the endpoint of interval events first, then transform the sub-interval events into point events according to the starting time for each sub-interval event. In other words, CER turns the original interval event into a point event. Thus, the relationship is no longer to be maintained.

The above methods all concentrate on pattern mining. Interval rule mining \cite{hoppner2001finding, hoppner2001discovery, hoppner2001learning, winarko2007armada} can discover patterns with confidence to predict the future sequence and reveal the correlation between intervals. These methods mainly focus on the definition of the interval rule, and the existing algorithms have all adopted a two-phase mining method. They first discover all frequent interval patterns and then generate interval rules according to the frequent interval patterns, causing a lot of interval patterns that are irrelevant to the interval rules being discovered and wasting a lot of time. For example, ARMADA \cite{winarko2007armada} uses an index data structure to maintain the position of each interval event in the sequence, which avoids multiple database scanning and then generates interval patterns by merging the index structures. However, all of these methods are frequency-driven and use the Apriori property \cite{agrawal1994fast} to shrink search space, making the pruning strategies inapplicable to utility-driven mining.

\subsection{Utility-driven pattern mining}

Usually, utility consists of two aspects: internal utility (\textit{iu)} and external utility (\textit{eu}). In point-based event mining, the external utility is constant for each event \cite{gan2020proum, gan2020fast, wang2016efficiently, yin2012uspan}, and the actual utility of an event is \textit{iu} $\times$ \textit{eu}. However, the \textit{eu} should vary over time in some scenarios. In interval mining, \textit{iu} can be viewed as the duration of the event. The actual utility of the event should therefore be the integral of the external utility over the internal utility. For example, in the smart home, the duration of use of an appliance and the corresponding power can derive the utility of the energy consumed by the appliance; in a computation server, the execution of a thread lasts for some time and the consumption of system resources accounts for each moment, where internal utility is the duration and external utility is the resources accounts for each moment.

Based on the utility concept, utility-driven pattern mining (UPM) \cite{yin2012uspan, gan2021survey,tran2020statistically} can discover patterns according to user preference. USpan \cite{yin2012uspan} is the first algorithm that can discover high-utility sequences in complex sequences. Although some studies \cite{shie2011mining, ahmed2010mining} proposed utility sequence mining before USpan, they cannot handle the complex sequence. After that, there are many efficient and effective UPM algorithms \cite{wang2016efficiently, gan2020fast} based on USpan. In point-based sequences, algorithms usually use different utility upper bounds to prune the search space, such as \textit{SWU}, \textit{PEU}, and \textit{RSU}. \textit{SWU} is the most basic utility upper bound, which counts the total utility of the sequences in which a candidate sequential pattern appears. Thus, \textit{SWU} has limited pruning capability. To prune the search space well, HUS-Span \cite{wang2016efficiently} proposed \textit{PEU} and \textit{RSU}. \textit{PEU} and \textit{RSU} calculate the utility upper bound according to the remaining utility. Therefore, a tighter utility upper bound can be produced. In addition, HUSP-SP \cite{zhang2022tusq} introduced a new utility upper bound called \textit{TRSU}, which is the tightest utility upper bound. Furthermore, there are many application-oriented methods \cite{zhang2021shelf, zhang2022tusq, gan2021explainable} about utility-driven point-based sequence mining. However, for exploring interval-based event sequences, HU-TIED \cite{huang2019mining}, HUTPMiner \cite{wang2020mining}, and HUIPMiner \cite{mirbagheri2021mining}, have been studied in the literature. HU-TIED first extracts all frequent interval patterns and then finds high-utility interval patterns according to the frequent interval patterns. HU-TIED cannot discover all high-utility interval patterns since there are some patterns that are high-utility but low-frequency. HUTPMiner is an algorithm that uses endpoint representation to mine high-utility interval patterns. HHUTPMiner captures high-utility interval patterns like utility-driven point-based sequence pattern mining, but it needs endpoints to happen in pairs to make sure events are complete. HUIPMiner \cite{mirbagheri2020high} uses CER to extract high-utility interval patterns and can apply many pruning strategies used in USpan and HUSP-SP.

\subsection{Utility-driven rule mining}

In addition to utility-driven pattern mining, utility-driven sequential rule mining is also a key research area. High-utility sequential rule mining (HUSRM) \cite{zida2015efficient} was the first utility-driven sequential rule mining algorithm, which used two utility upper bounds, \textit{LEPEU} and \textit{REPEU}, to shrink the search space. \textit{LEPEU} and \textit{REPEU} are calculated separately based on the remaining utility of events that can be extended into the antecedent and consequent. To maintain the information of potential sequential rules, a data structure named utility table was also introduced by \cite{zida2015efficient}. Based on HUSRM, US-Rule introduced two tighter utility upper bounds, \textit{LERSU} and \textit{RERSU}, to efficiently find all sequential rules. Both HUSRM and US-Rule use the partially-ordered sequential rule, which will cause some problems in sequences where the timing relation cannot be changed. TotalSR \cite{zhang2022totally} introduces two novel utility upper bounds called \textit{LERSPEU} and \textit{RERSPEU}, which are tighter than \textit{LERSU} and \textit{RERSU} respectively, to discover totally ordered sequential rules solving the problem in HUSRM and US-Rule. Besides, TotalSR proposed a pruning strategy regarding confidence since rule mining has two measures: utility and confidence. In particular, utility-driven sequential rule mining can be applied to other fields such as anomaly detection \cite{gan2021anomaly} and gene rule discovery \cite{segura2022mining}. However, no literature focuses on utility-driven interval rules.
\section{Background}   \label{sec:preliminaries}

In this section, we give some important definitions related to this paper. Then, we formulate the problem of interval rule mining.

\subsection{Notations and definitions}

\begin{definition}[Interval with utility]
  \rm  Let $\Sigma$ $=$ \{$e_1$, $e_2$, $\cdots$, $e_m$\} be a set of $m$ event symbols. An interval event is composed of a four-tuple: \textit{E} $=$ ($e$, $st$, $ft$, $u$), where $e$ $\in$ $\Sigma$ is the symbol of this interval event, $st$ is the start time of this event, \textit{ft} is the finish time of this event, and \textit{u} is its utility. For brevity, we refer to intervals as "interval events". We use $E.e$, $E.st$, $E.ft$, and $E.u$ to represent the interval's type, start time, finish time, and utility, respectively.
\end{definition}

In this paper, we use the actual utility of the interval directly rather than deriving it indirectly from the internal and external utilities.

\begin{definition}[Order of intervals and E-sequence]
    \rm   Given two interval events $E_1$ and $E_2$, we say $E_2$ will follow $E_1$ if $E_1.st$ $\textless$ $E_2.st$, otherwise if $E1.st$ $=$ $E_2.st$ and $E_1.ft$ $\textless$ $E_2.ft$, and otherwise if $E_1.st$ = $E_2.st$ and $E_1.ft$ $=$ $E_2.ft$ and $E_1.e$ $\prec_{lex}$ $E_2.e$. Note that $E_1.e$ $\prec_{lex}$ $E_2.e$ means the event will follow lexicographic order \cite{lee2020z}. An E-sequence $S$ is a set of ordered interval events. 
\end{definition}

\begin{definition}[Size of E-sequence]
   \rm Let \textit{S} be an E-sequence. The size of \textit{S} is defined as the number of interval events that occur in \textit{S}.
\end{definition}

\begin{definition}[Interval database]
   \rm   An interval database $\mathcal{D}$ is a set of E-sequences $<$$S_1$, $S_2$, $\cdots$, $S_n$$>$, where $S_i$ $(1$ $\le$ $i$ $\le$ $n)$ is an E-sequence with a particular identifier $i$.
\end{definition}

We use Allen's relations \cite{allen1983maintaining} to represent the relation between two intervals in this paper. Allen's relations stipulate 13 relations that two intervals can form. We only use seven relations to present the relation between two intervals since the other 6 relations are redundant, as shown in Fig. \ref{relation}. Besides, we use a numerical representation to describe each relation. For brevity, we use the first letter of each relation to represent it. For example, \textbf{b} signifies \textbf{before}, \textbf{o} represents \textbf{overlaps}.

\begin{definition}[Temporal relation between two intervals]
    Given two intervals $E_{1}$ and $E_{2}$, the relation between $E_{1}$ and $E_{2}$ is defined as one of the Allen's relations \cite{allen1983maintaining} and denoted as $R$($E_{1}$, $E_{2}$).
\end{definition}

\begin{figure}[h]
    \centering
    \includegraphics[width=1.02\linewidth]{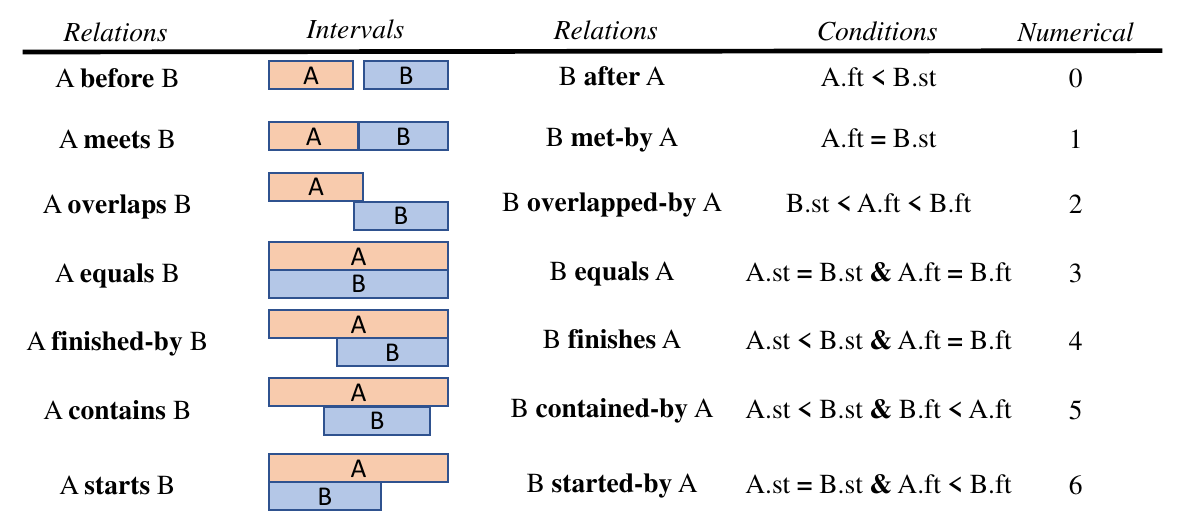}
    \caption{Allen's relations \cite{allen1983maintaining} with numerical representation.}
    \label{relation}
\end{figure}

\begin{definition}[Interval rule]
    \label{definition:IR}
    \rm An interval rule (IR) is defined as $r$ $=$ $X$ $\rightarrow$ $Y$, where $X$ and $Y$ are E-sequences with both sizes greater than or equal to 1, and $X$ $\cap$ $Y$ $=$ $\emptyset$. Meanwhile, the start time of each interval in $X$ should be earlier or less than the start time of intervals in $Y$. In addition, $X$ and $Y$ are the antecedent and consequent of $r$, respectively. Note that the start time of intervals in $X$ is less than intervals in $Y$, which ensures that $X$ can infer $Y$.
\end{definition}

\begin{definition}[Temporal relations between intervals in an interval rule]
    \label{definition:IR}
    \rm Given an IR $r$ $=$ $X$ $\rightarrow$ $Y$, its antecedent size is $k$ and the size of $Y$ is $n$. Thus, the relations of $r$ are described as $\mathcal{R}$ $=$ \{$R$($E_{1}$, $E_{2}$), $\cdots$, $R$($E_{k-1}$, $E_{k}$), $\dots$, $R$($E_{k+n-1}$, $E_{k+n}$)\}. Traditionally, $\mathcal{R}$ is represented using a matrix.
\end{definition}

\begin{definition}[IR occurrence]
    \rm Given an IR $r$ $=$ $\{$$E_1$, $\cdots$, $E_{k}$$\}$ $\rightarrow$ $\{$$E_{k+1}$, $\cdots$, $E_{n}$$\}$ with relations $\mathcal{R}$ $=$ \{$R$($E_{1}$, $E_{2}$), $\cdots$, $R$($E_{n-1}$, $E_{n}$)\} and an E-sequence $S$ $=$ $\{$${E^\prime}_1$, ${E^\prime}_2$, $\cdots$, ${E^\prime}_m$$\}$, we say that $r$ appears in $S$ if and only if there exists integers $1$ $\le$ $j_1$ $\textless$ $j_2$ $\textless$ $\cdots$ $\textless$ $j_n$ $\le$ $m$ such that $E_1$.$e$ $=$ ${E^\prime}_{j_1}$.$e$, $E_2$.$e$ $=$ ${E^\prime}_{j_2}$.$e$, $\cdots$, $E_n$.$e$ $=$ ${E^\prime}_{j_n}$.$e$ and $\forall$ $1$ $\le$ $a$ $\textless$ $b$ $\le$ $n$, $R$($E_{a}$, $E_{b}$) = $R$(${E^\prime}_{j_a}$, ${E^\prime}_{j_b}$). Moreover, we use \textit{seq}(\textit{r}) to represent the set of E-sequences that IR $r$ occurs and \textit{ant}(\textit{r}) to represent the set of E-sequences that the antecedent of $r$ appears.
\end{definition}

\begin{definition}[Support and confidence]
    \rm Given an IR and an interval database $\mathcal{D}$, the support of $r$ is defined as $\lvert$$seq(r)$$\lvert$ / $\lvert$$\mathcal{D}$$\rvert$, which equals the number of E-sequences containing $r$ divides by the number of E-sequences in $\mathcal{D}$. We use \textit{sup}($r$) to denote it. The confidence of IR $r$ is defined as \textit{conf}($r$) $=$ $\lvert$$seq(r)$$\lvert$ / $\lvert$$ant(r)$$\rvert$, which signifies that the confidence of IR $r$ is equal to the number of E-sequences containing $r$ divided by the number of E-sequences that $r$'s antecedent occurs \cite{zida2015efficient, huang2021us, zhang2022totally}.
\end{definition}

\begin{definition}[Utility of an IR in an E-sequence and in a database]	
    \rm Given an IR $r$ and an E-sequence $S_k$, the utility of $r$ in $S_k$ is denoted as $u$($r$, $S_k)$, which is the sum of each interval's utility in $r$ and $r$ occur in $S_k$, i.e., $u$($r$, $S_k)$ $=$ $\sum_{E \in r \land S_k \subseteq \textit{seq}(r)}$ $E.u$. Given an IR $r$ and an interval database $\mathcal{D}$, the utility of $r$ in $\mathcal{D}$ is defined as $u$($r$) $=$ $\sum_{S_k \in \textit{seq}(r) \land \textit{seq}(r) \subseteq \mathcal{D}}$ $u$($r$, $S_k)$ \cite{zida2015efficient, huang2021us, zhang2022totally}.
\end{definition}

\subsection{Problem statement}

\begin{definition}[Utility-driven interval rule]	
    \rm Given a minimum utility threshold \textit{minutil} and a minimum confidence threshold \textit{minconf} $\in$ [0, 1], the utility-driven interval rule (UIR) is the IR that satisfies \textit{minutil} and \textit{minconf}, simultaneously \cite{zhang2022totally, huang2021us, zida2015efficient}.
\end{definition}

\textbf{Problem statement:} Given an interval database $\mathcal{D}$, according to the definition of utility-driven interval rule discovery, the problem of utility-driven interval rule discovery is to find all UIRs in $\mathcal{D}$.

\begin{definition}[Interval rule extension]	
    \rm There are two types of extensions: the left extension and the right extension. Let $r$ $=$ $X$ $\rightarrow$ $Y$ be an IR and $E$ be an interval. The left extension is that the interval $E$ is inserted into the antecedent of $r$. Meanwhile, $E$ can form particular relations with the intervals in $r$. In other words, if we extend the same type of interval but with different relations to IR, we get a different new IR. It is similar to the I- and S- extensions in the point-based sequence \cite{gan2020fast, wang2016efficiently, gan2020proum, yin2012uspan}. The right extension is to insert $E$ into the consequent of $r$.
\end{definition}

\begin{figure}[h]
    \centering
    \includegraphics[width=0.94\linewidth]{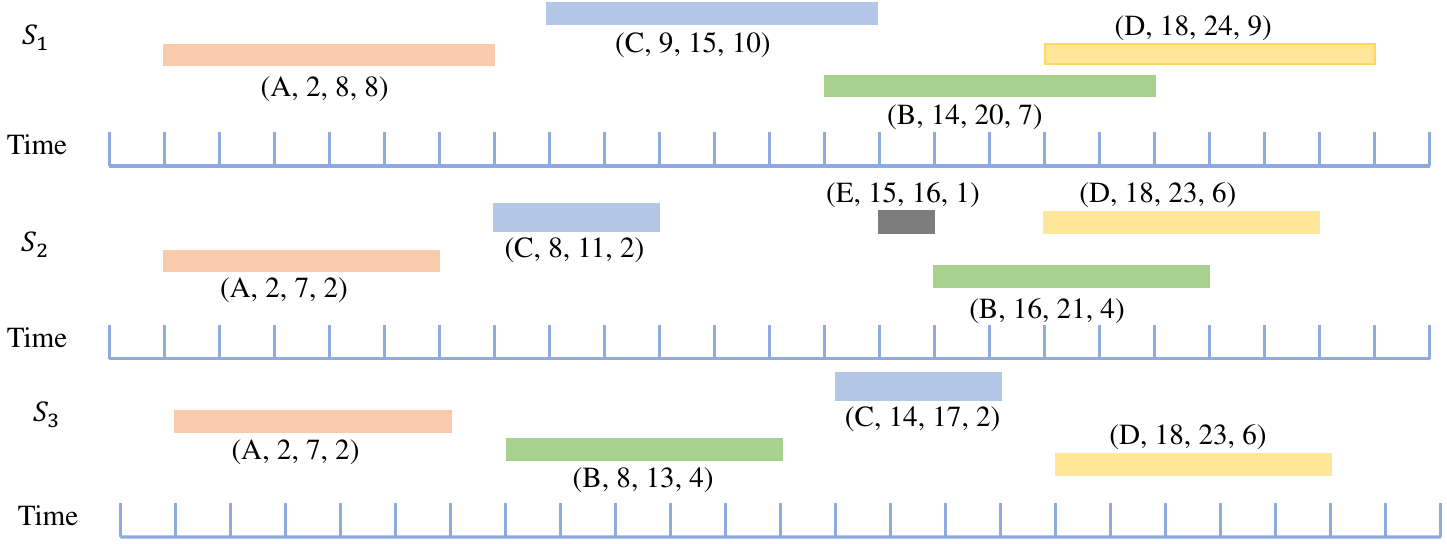}
    \caption{A running example interval database.}
    \label{example}
\end{figure}

\begin{table}[h]
	\centering
	\caption{The discovery UIRs with \textit{minutil} = 35 and \textit{minconf} = 0.6.}
	\label{UIRs}
     \scalebox{0.97}{
        \begin{tabular}{|c|c|c|c|c|}  
    		\hline 
    		\textbf{UIR} & \textbf{utility} & \textbf{conf} & \textbf{relations} & \textbf{numerical relations}\\
    		\hline 
    		\{A\} $\rightarrow$ \{B, D\} & 36 & 0.67 & \{\textbf{b}, \textbf{b}, \textbf{o}\} & ($0_{7}$, $2{_7}$)\\ 
    		\hline 
    		\{A\} $\rightarrow$ \{C, D\} & 47 & 1 & \{\textbf{b}, \textbf{b}, \textbf{b}\} & ($0_{7}$, $0{_7}$)\\ 
    		\hline 
    		\{A, B\} $\rightarrow$ \{D\} & 36 & 0.67 & \{\textbf{b}, \textbf{b}, \textbf{o}\} & ($0_{7}$, $2{_7}$)\\ 
    		\hline 
                \{A, C\} $\rightarrow$ \{D\} & 47 & 1 & \{\textbf{b}, \textbf{b}, \textbf{b}\} & ($0_{7}$, $0{_7}$)\\ 
    		\hline 
                \{C\} $\rightarrow$ \{D\} & 35 & 1 & \{\textbf{b}\} & ($0{_7}$)\\ 
    		\hline 
    \end{tabular}
     }	
\end{table}

Fig. \ref{example} is a running example database used in this paper. Table \ref{UIRs} is the output UIRs with \textit{minutil} = 35 and \textit{minconf} = 0.6. In this interval database, there are three ordered E-sequences: $S_1$, $S_2$, and $S_3$. The size of $S_1$ is 4 since there are 4 intervals appearing in $S_1$. Besides, there is an IR $r_1$ = $\{$\textit{A}$\}$ $\rightarrow$ $\{\textit{B}$, \textit{D}$\}$ with relations $\mathcal{R}$ $=$ \{\textbf{b}, \textbf{b}, \textbf{o}\}, i.e., \textit{A} \textbf{b} \textit{B}, \textit{A} \textbf{b} \textit{D}, and \textit{B} \textbf{o} \textit{D}. Since IR $r_1$ appears in two E-sequences: $S_1$ and $S_2$, the support of $r_1$ is equivalent to $\lvert$$seq(r_1)$$\lvert$ / $\lvert$$\mathcal{D}$$\rvert$ = $\lvert$\{$S_1$, $S_2$\}$\lvert$ / $\lvert$\{$S_1$, $S_2$, $S_3$\}$\rvert$ = 0.67. The antecedent occurs in three E-sequences: $S_1$, $S_2$, and $S_3$, so the support of $r_1$'s antecedent is $\lvert$\{$S_1$, $S_2$, $S_3$\}$\rvert$ / $\lvert$\{$S_1$, $S_2$, $S_3$\}$\rvert$ = 1. Thus, the confidence of $r_1$ is 0.67 / 1 = 0.67. Also, the utility of $r_1$ is $u$($r$) $=$ $\sum_{S_k \in \textit{seq}(r_1) \land \textit{seq}(r_1) \subseteq \mathcal{D}}$ $u$($r_1$, $S_k)$ = $\sum_{E \in r_1 \land S_1 \subseteq \textit{seq}(r_1)}$ $E.u$ $+$ $\sum_{E \in r_1 \land S_2 \subseteq \textit{seq}(r_1)}$ $E.u$ = $24$ + $12$ = 36. $r_1$ is generated from the candidate IR $r$ $=$ $\{$$A$$\}$ $\rightarrow$ $\{$$D$$\}$ by performing a left extension to insert an interval $B$ into the $r$'s antecedent with relations $\mathcal{R}$ $=$ \{\textbf{b}, \textbf{o}\}, i.e., $A$ \textbf{b} $B$ and $B$ \textbf{o} $D$. $r$ also can insert $B$ into its antecedent with relations $A$ \textbf{b} $B$ and $B$ \textbf{b} $D$ to generate IR $r^\prime$ $=$ $\{$\textit{A}, \textit{B}$\}$ $\rightarrow$ $\{$\textit{D}$\}$ with relations $\mathcal{R}$ $=$ \{\textbf{b}, \textbf{b}, \textbf{b}\}, but the confidence of $r^\prime$ is 0.33. Thus, $r^\prime$ is not a UIR.
\section{Proposed Method}   \label{sec:method}

In this section, we introduce relation representation, some critical pruning strategies, and the data structures used in this paper. Then, we propose a novel algorithm, namely UIRMiner, that can efficiently discover all utility-driven interval rules.

\subsection{Relation representation}

In general, we need to use a matrix to represent all relations in an E-sequence. For example, Fig. \ref{RR} shows the relations of $S_1$. Fig. \ref{RR}(a) is the original relations $\mathcal{R}$ of $S_1$. Note that each value in the matrix corresponds to a value in $\mathcal{R}$ when arranged from top to bottom, left to right. From Fig. \ref{RR}(a), we can observe that there are many \textbf{b} in $S_1$. This is because, in an ordered E-sequence, those intervals at the end of the E-sequence tend to only form a relation \textbf{b} with those intervals at the head of the E-sequence.

In STIPA \cite{sharma2018stipa}, they used a compressed relation representation like Fig. \ref{RR}(b). However, it still needs to know which relation should be recorded. The Z-miner algorithm \cite{lee2020z} introduced a quick relation confirmation technique. They divided a candidate E-sequence into two parts: \textit{earlier intervals} and \textit{checking area}. The intervals that require the confirmation of relations are located in \textit{checking area}. Although it can avoid many unnecessary comparisons, Z-miner still needs to construct \textit{earlier intervals} and \textit{check area} each time.

Inspired by \cite{sharma2018stipa} and \cite{lee2020z}, we propose a new relation representation, as a numerical encoding relation representation, to store the relations between intervals. Since there are only 7 relations that can be formed between two intervals, we use a 7-ary-based numerical encoding to represent the relations between intervals. A 7-ary number means that each digit in the number is between 0 and 6. An octal-based numerical encoding can also represent the relations, but there is a redundant value. 7-ary-based numerical encoding can use less than three bits to represent a digit, which can save more storage space. For example, as Fig. \ref{RR}(c) shows, each interval's relation is represented by a 7-ary number. Each column in Fig. \ref{RR} (a) corresponds to a number in Fig. \ref{RR} (c). As an illustration, consider the third column of Fig. \ref{RR} (a). The top ``b" corresponds to the highest digit ``0" in Fig. \ref{RR} (c), while the second ``b" corresponds to the middle digit ``0" in Fig. \ref{RR} (c). Finally, the ``o" corresponds to the last digit ``2" in Fig. \ref{RR} (c).

\begin{figure}[h]
    \centering
    \includegraphics[width=0.85\linewidth]{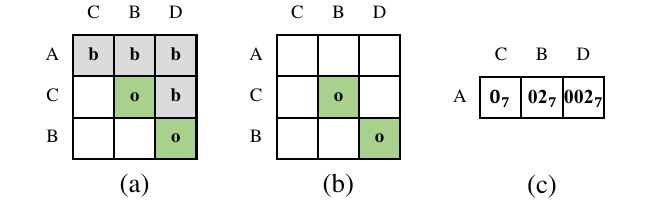}
    \caption{Different relation representations of $S_1$.}
    \label{RR}
\end{figure}

For the interval \textit{B} (the third interval in $S_1$), its relation is $02_7$. It potentially divides the interval relation into two parts: the zeros in front of the number and the non-zeros in the back. These two parts correspond to the \textit{earlier intervals} and \textit{checking areas}, respectively. Note that each digit of $02_7$ represents a relation between the intervals before \textit{B} and \textit{B}. For instance, the first digit $0$ represents the relation \textbf{b} between intervals \textit{A} and \textit{B}, and the second digit symbolizes the relation \textbf{o} between intervals \textit{C} and \textit{B}, (i.e., \textit{A} \textbf{b} \textit{B} and \textit{C} \textbf{o} \textit{B}). Therefore, when we determine the relation of \textit{D} we only need to check the relation of the intervals corresponding to the non-zero part in the previous interval relation, i.e., interval \textit{B}'s relation.

With the help of numerical encoding relation representation, we can quickly determine all relations of intervals to others, avoiding many unnecessary comparisons. We do not construct any preliminary data structures to maintain them. 

\subsection{Pruning strategies}

In utility-driven knowledge discovery, the utility upper bound is the general approach to pruning the search space. Thus, in this part, we will present several widely used utility upper bounds. Then, we combine the utility upper bound and relations to design a novel utility pruning strategy. In point-based sequential rule mining \cite{huang2021us, zhang2022totally, zida2015efficient}, there are many pruning strategies related to utility, such as \textit{SEU}, \textit{LERSPEU}, and \textit{RERSPEU}. Note that the confidence pruning strategy \cite{zhang2022totally} is frequency-dependent. We first introduce these strategies, and then present the proposed pruning strategy in light of the relation and utility upper bounds.

\begin{upper bound}[Sequence estimated utility of interval]
    \label{SEU}
    \rm Let $E$ be an interval and $\mathcal{D}$ be an interval database. The sequence estimated utility of $E$, denoted as \textit{SEU}($E$), is defined as \textit{SEU}($E$) $=$ $\sum_{E \in S_k \land S_k \in seq(E)}$\textit{E.u} \cite{zida2015efficient}.
\end{upper bound}

\begin{upper bound}[Left extension reduced sequence prefix extension utility]
    \label{Upper bound: LERSPEU}
    \rm Given a candidate IR $c$, an E-sequence $S$, and the other IR $r$, $r$ is extended from $c$ by performing a left extension with an interval $E$. The left extension reduced sequence prefix extension utility of $r$ in E-sequence $S$, denoted as \textit{LERSPEU}($r$, $S$), is defined as:
	$$ \textit{LERSPEU}(r, S)=\left\{
	\begin{gathered}
	u(r, S) + \textit{ELeft}(r, E, S), S  \in seq(r) \\
	0, otherwise.
	\end{gathered}
	\right.
	$$
\end{upper bound}

Let \textit{ELeft}(\textit{r, E, S}) represent the sum utility of the intervals from $E$ to the last interval in $S$ that can extend into the antecedent \cite{zhang2022totally}. The \textit{LERSPEU} of an IR $r$ in interval database $\mathcal{D}$, denoted as \textit{LERSPEU}(\textit{r}), can be defined as:
\begin{center}
	\textit{LERSPEU}($r$) $=$ $\sum_{S \in seq(r) \land seq(r) \subseteq \mathcal{D}}$ \textit{LERSPEU}($r$, $S$).
\end{center}

\begin{theorem}
    \label{Theorem:LERSPEU}
    Given an IR $r$ $=$ $X$ $\rightarrow$ $Y$ and the other IR $r^\prime$, where $r^\prime$ is extended with an interval $E$ from $r$ by performing a left extension, we have $u$($r^\prime$) $\le$ \textit{LERSPEU}($r^\prime$). 
\end{theorem}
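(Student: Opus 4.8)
The plan is to reduce the database-level inequality $u(r') \le \textit{LERSPEU}(r')$ to a sequence-by-sequence comparison, and then to establish the per-sequence inequality directly from the definitions. The key observation is that both $u(r')$ and $\textit{LERSPEU}(r')$ are sums over E-sequences, but they range over different index sets: $u(r') = \sum_{S \in seq(r')} u(r', S)$, whereas $\textit{LERSPEU}(r') = \sum_{S \in seq(r)} \textit{LERSPEU}(r', S)$, where $r$ is the (shorter) IR that $r'$ was left-extended from. Since $r'$ is obtained from $r$ by inserting an interval into the antecedent, any E-sequence containing $r'$ must also contain $r$ (the occurrence of $r'$ witnesses an occurrence of $r$), so $seq(r') \subseteq seq(r)$. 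This containment is what lets us compare the two sums.

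First I would fix an arbitrary $S \in seq(r)$ and show $u(r', S) \le \textit{LERSPEU}(r', S)$ in both cases of the piecewise definition. If $S \notin seq(r')$, then $u(r', S) = 0$ (there is no occurrence of $r'$ in $S$, so the utility is $0$ by convention) while $\textit{LERSPEU}(r', S) = 0$ as well, so equality holds trivially. If $S \in seq(r')$, then by definition $\textit{LERSPEU}(r', S) = u(r', S) + \textit{ELeft}(r', E, S)$, and since $\textit{ELeft}(r', E, S)$ is a sum of interval utilities — which I will assume are nonnegative, consistent with the utility model in the paper — we get $\textit{LERSPEU}(r', S) \ge u(r', S)$. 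Summing over all $S \in seq(r)$ then gives
\begin{center}
$\textit{LERSPEU}(r') = \sum_{S \in seq(r)} \textit{LERSPEU}(r', S) \ge \sum_{S \in seq(r)} u(r', S) \ge \sum_{S \in seq(r')} u(r', S) = u(r'),$
\end{center}
where the last inequality uses $seq(r') \subseteq seq(r)$ together with $u(r', S) \ge 0$ for the extra terms (in fact $u(r', S) = 0$ for $S \in seq(r) \setminus seq(r')$, so this is an equality).

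The step I expect to be the main obstacle — or at least the one requiring the most care — is justifying that $\textit{ELeft}(r', E, S)$ genuinely dominates the extra utility contributed by \emph{any} further left extension of $r'$ within $S$. The definition only says $\textit{ELeft}(r', E, S)$ sums the utilities of intervals from $E$ to the last interval in $S$ that \emph{can} be extended into the antecedent; I would need to argue that every interval that could be added to $X$ in a future left extension lies in this range and is counted exactly once, so that no future-added utility is missed. This hinges on the structural fact (from the definition of left extension and the ordering of intervals) that intervals inserted into the antecedent must start before the consequent and after the current occurrence's relevant anchor, hence fall in the claimed suffix of $S$. A secondary subtlety is making precise that the occurrence of $r'$ used to evaluate $u(r', S)$ and the one used to evaluate $\textit{ELeft}$ can be taken to be the same occurrence (or that the bound holds for the occurrence maximizing the right-hand side); I would handle this by noting the definitions are stated with respect to a fixed chosen occurrence per E-sequence, as is standard in the HUSRM/TotalSR framework cited in the paper, so the comparison is made occurrence-wise and the argument above goes through unchanged.
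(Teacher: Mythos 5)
Your proof is correct and follows essentially the same route as the paper's: establish the inequality for each E-sequence and then sum over the database. The only cosmetic difference is that the paper justifies the per-sequence step via the decomposition $u(r^\prime, S) = u(r, S) + u(E, S)$ together with $u(E, S) \le \textit{ELeft}(r, E, S)$, whereas you invoke the nonnegativity of \textit{ELeft} directly from the definition; both amount to the same one-line observation, and your closing worry about \textit{ELeft} dominating \emph{future} left extensions concerns the validity of the pruning strategy built on this bound, not the theorem as stated.
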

\begin{proof}
    \label{Proof:LERSPEU}
    \rm Let $S$ be an E-sequence, $r$ be an IR, and $E$ be an interval that will be extended into the antecedent of $r$ to form IR $r^\prime$. According to \textit{ELeft}($r$, $E$, $S$) in upper bound \textit{LERSPEU}, we have $u$($E$, $S$) $\le$ \textit{ELeft}($r$, $E$, $S$). Then, we have $u$($r^\prime$, $S$) $=$ $u$($r$, $S$) $+$ $u$($E$, $S$) $\le$ $u$($r$, $S$) $+$ \textit{ELeft}($r$, $E$, $S$) $=$ \textit{LERSPEU}($r^\prime$, $S$). Therefore, $u$($r^\prime$) $\le$ \textit{LERSPEU}($r^\prime$).
\end{proof}

For example, there is a candidate interval rule $r$ = $\{A\}$ $\rightarrow$ $\{D\}$ with $\mathcal{R}$ $=$ \{\textbf{b}\} and an E-sequence $S_3$. If we want to extend an interval $C$ into its antecedent, the \textit{LERSPEU}($r$, $S_3$) = $u$($r$, $S_3$) + \textit{ELeft}($r$, $C$, $S_3$) = 8 $+$ 2 = 10. Note that the utility of interval $B$ is not included since $B$ is before $C$.

\begin{upper bound}[Right extension reduced sequence prefix extension utility]
    \label{Upper bound: RERSPEU}
    \rm Given a candidate IR $c$, an E-sequence $S$, and the other IR $r$, $r$ is extended from $c$ by performing a right extension with an interval $E$. The right extension reduced sequence prefix extension utility of $r$ in E-sequence $S$, denoted as \textit{RERSPEU}($r$, $S$), is defined as:
	$$ \textit{RERSPEU}(r, S)=\left\{
	\begin{gathered}
	u(r,S) + \textit{ERight}(r,E,S), S \in seq(r) \\
	0, otherwise.
	\end{gathered}
	\right.
	$$
\end{upper bound}

Here \textit{ERight}(\textit{r, E, S}) represents the sum utility of the intervals from $E$ to the last interval in $S$ that can be extended into the consequent \cite{zhang2022totally}. Thus, the \textit{RERSPEU} of an IR $r$ in a database $\mathcal{D}$, denoted as \textit{RERSPEU}(\textit{r}), can be defined as:
\begin{center}
    \textit{RERSPEU}($r$) $=$ $\sum_{S \in seq(r) \land seq(r) \subseteq \mathcal{D}}$ \textit{RERSPEU}($r$, $S$).
\end{center}

\begin{theorem}
    \label{Theorem:RERSPEU}
    Given an IR $r$ $=$ $X$ $\rightarrow$ $Y$ and the other IR $r^\prime$, where $r^\prime$ is extended with an interval $E$ from $r$ by performing a right extension, we have $u$($r^\prime$) $\le$ \textit{RERSPEU}($r^\prime$). 
\end{theorem}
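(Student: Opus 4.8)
The plan is to mirror the proof of Theorem~\ref{Theorem:LERSPEU} almost verbatim, replacing the left extension / antecedent / \textit{ELeft} data with the right extension / consequent / \textit{ERight} data, and to argue one E-sequence at a time before summing over the database. First I would fix an arbitrary E-sequence $S \in seq(r')$. Since $r'$ is obtained from $r$ by a right extension, every occurrence witnessing $r'$ in $S$ restricts to an occurrence of $r$ in $S$ (the matched positions of the intervals of $X$ and of the original part of $Y$, with the same types and the same pairwise relations, still form a valid occurrence of $r$). Hence $seq(r') \subseteq seq(r)$, and in particular $u(r, S)$ is well defined for the $S$ we fixed. This containment is the structural fact on which the whole argument rests.

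Next I would decompose the utility of $r'$ in $S$. Because a right extension only inserts the single new interval $E$ into the consequent and leaves all previously matched intervals, their types, and their relations untouched, we have $u(r', S) = u(r, S) + u(E, S)$, where $u(E, S)$ is the utility contributed by the matched copy of $E$ in $S$. Then I would invoke the definition of \textit{ERight}: $\textit{ERight}(r, E, S)$ is the sum of the utilities of all intervals from $E$ to the last interval of $S$ that are eligible to be extended into the consequent, and $E$ is itself one of those intervals, so $u(E, S) \le \textit{ERight}(r, E, S)$. Combining,
$$u(r', S) = u(r, S) + u(E, S) \le u(r, S) + \textit{ERight}(r, E, S) = \textit{RERSPEU}(r', S),$$
where the last equality uses $S \in seq(r)$ and the first branch of the definition of \textit{RERSPEU}. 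Summing this inequality over all $S \in seq(r')$, the left side is exactly $u(r')$, and the right side is $\sum_{S \in seq(r')} \textit{RERSPEU}(r', S) = \textit{RERSPEU}(r')$ since the omitted terms (for $S \notin seq(r')$) are all $0$. This yields $u(r') \le \textit{RERSPEU}(r')$.

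The step that needs the most care is the pair of bookkeeping claims $seq(r') \subseteq seq(r)$ and $u(r', S) = u(r, S) + u(E, S)$: one must confirm from the definitions of interval rule extension and of \textit{IR} occurrence that a right extension never deletes or relabels a previously matched interval nor alters any existing relation, so that the only change to the utility in $S$ is the additive contribution of the single new interval $E$, and that $E$ indeed lies in the range of intervals summed by \textit{ERight}. Once these are in place, the rest is a routine transcription of the \textit{LERSPEU} proof.
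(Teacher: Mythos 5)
Your proposal is correct and follows essentially the same route as the paper's proof: decompose $u(r^\prime, S)$ as $u(r, S) + u(E, S)$, bound $u(E, S)$ by $\textit{ERight}(r, E, S)$, and sum over the E-sequences in $seq(r^\prime)$. The additional bookkeeping you supply (the containment $seq(r^\prime) \subseteq seq(r)$ and the explicit treatment of the zero terms in the sum) is a welcome elaboration of steps the paper leaves implicit, but it does not change the argument.
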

\begin{proof}
    \label{Proof:RERSPEU}
    \rm Let $S$ be an E-sequence, $r$ be an IR, and $E$ be an interval that can be extended into the consequent of $r$ to form an IR $r^\prime$. According to \textit{ERight}($r$, $E$, $S$) in upper bound \textit{RERSPEU}, we have $u$($E$, $S$) $\le$ \textit{ERight}($r$, $E$, $S$). Then, we have $u$($r^\prime$, $S$) $=$ $u$($r$, $S$) $+$ $u$($E$, $S$) $\le$ $u$($r$, $S$) $+$ \textit{ERight}($r$, $E$, $S$) $=$ \textit{RERSPEU}($r^\prime$, $S$). Thus, $u$($r^\prime$) $\le$ \textit{RERSPEU}($r^\prime$).
\end{proof}

For example, there is a candidate interval rule $r$ = $\{A\}$ $\rightarrow$ $\{C\}$ with $\mathcal{R}$ $=$ \{\textbf{b}\} and an E-sequence $S_1$. If we want to extend an interval $D$ into its antecedent, the \textit{LERSPEU}($r$, $S_1$) = $u$($r$, $S_1$) + \textit{ERight}($r$, $C$, $S_1$) = 18 $+$ 9 = 27. Note that the utility of interval $B$ is not included since $B$ is before $D$.

\begin{strategy}[Unpromising interval pruning strategy]
    \label{Strategy:UIP}
    \rm Given an interval database $\mathcal{D}$, UIRMiner will remove all unpromising intervals from $\mathcal{D}$. For an unpromising interval $E$, $ E$'s \textit{SEU} is smaller than \textit{minutil}. As a result, the utility of any IR that contains an interval $E$ will not exceed the \textit{minutil}. In other words, the unpromising interval $E$ will not be contained in an IR, which means the interval $E$ is useless for UIRs. Thus, we can remove the interval from $\mathcal{D}$ directly. Similar to US-Rule \cite{huang2021us} and TotalSR \cite{zhang2022totally}, after removing some intervals from $\mathcal{D}$, the other intervals' \textit{SEU} will be changed. UIRMiner will keep using the unpromising interval pruning strategy until no intervals are removed.
\end{strategy}

\begin{strategy}[Left expansion reduced sequence prefix extension utility pruning strategy]
    \label{Strategy:LERSPEU}
    \rm Given an IR $r$, according to Theorem \ref{Theorem:LERSPEU}, when $r$ implements a left expansion with a specific interval $E$ to generate the other IR $r^\prime$, the utility of $r^\prime$ will not exceed the upper bound \textit{LERSPEU}, i.e., $u$($r^\prime$) $\le$ \textit{LERSPEU}($r^\prime$) \cite{zhang2022totally}. If \textit{LERSPEU}($r^\prime$) $\textless$ \textit{minutil}, we can know that any IR extending from $r$ by performing a left expansion will not be a UIR. Thus, we can stop extending further.
\end{strategy}

\begin{strategy}[Right expansion reduced sequence prefix extension utility pruning strategy]
    \label{Strategy:RERSPEU}
    \rm Given an IR $r$, according to Theorem \ref{Theorem:RERSPEU}, when $r$ implements a right expansion with a specific interval $E$ to generate the other IR $r^\prime$, the utility of $r^\prime$ will not exceed the upper bound \textit{RERSPEU}, i.e., $u$($r^\prime$) $\le$ \textit{RERSPEU}($r^\prime$) \cite{zhang2022totally}. If \textit{RERSPEU}($r^\prime$) $\textless$ \textit{minutil}, we can know that any IR extending from $r$ by performing a right expansion will not be a UIR. Thus, we can stop extending further.
\end{strategy}

The relations between the extending interval and the intervals in the candidate IR are unknown when computing the upper bounds. We propose a \textit{utility complement pruning} strategy based on \textit{LERSPEU} and \textit{RERSPEU}. The idea of the \textit{utility complement pruning} strategy is that for each extending interval and the last interval in the candidate IR, there are seven relations that can be formed in theory. For each exact relation, the upper bound of utility is mutually exclusive.

\begin{strategy}[utility complement pruning strategy]
    \label{Strategy:RERSPEU}
    \rm Given an IR $r$, an interval $E$, and corresponding utility upper bound $U$, we extend $E$ to IR $r$ with relation order \textit{0, 1, 2, 3, 4, 5, 6} as shown in Fig. \ref{relation}. Note that the relationship is between $E$ and the last interval in $r$'s antecedent or consequent according to the left extension or right extension. Therefore, when performing some relation extensions, we can know the remaining relation's utility upper bound. If the remaining relation's utility upper bound is less than \textit{minutil}, we can stop extending the remaining relations and break out.
\end{strategy}

For example, in Table \ref{UIRs}, there is a UIR $r_1$ $=$ $\{A\}$ $\rightarrow$ $\{B, D\}$ with $\mathcal{R}$ $=$ \{\textbf{b}, \textbf{b}, \textbf{o}\}. $r$ is extended from a candidate IR $r^\prime$ $=$ $\{A\}$ $\rightarrow$ $\{B\}$ with $\mathcal{R}$ $=$ \{\textbf{b}, \textbf{b}\} and \textit{RERSPEU}($r^\prime$) $=$ $\sum_{S \in seq(r^\prime) \land seq(r^\prime) \subseteq \mathcal{D}}$\{$u$($r^\prime$, $S$) $+$ $u$($E$, $S$)\} = 15 $+$ 9 $+$ 6 $+$ 6 $+$ 6 $+$ 6 $=$ 49. Thus, when $r^\prime$ extends an interval $D$ into its consequent, we know that $B$ and $D$ can form seven relations. We first extend $D$ with the relation R($B$, $D$) = \textbf{b}. According to the database, we know that \textit{RERSPEU}($r^\prime$) under this condition is 12, which is smaller than the \textit{minutil}. We can subtract 12 from 49 to get a lower \textit{RERSPEU}($r^\prime$). When extending $D$ with the other relations \textit{RERSPEU}($r^\prime$) = 49 $-$ 12 = 37. After extending $D$ with the relation R($B$, $D$) = \textbf{o}, \textit{RERSPEU}($r^\prime$) becomes 0. Therefore, we can stop extending $D$ directly.

Fig. \ref{CP} vividly shows the general situations. Note that the total upper bound can represent either \textit{LERSPEU} or \textit{RERSPEU}. Each block signifies one specific extending relation and its utility upper bound. In Fig. \ref{CP}(a), we know that when we extend an interval with a relation \textbf{b} we can get a promising extension. Then we can extend further. After that, we can subtract the already-used relation utility upper bound from the total upper bound. Therefore, like Fig. \ref{CP}(b) when we perform the other specific relation extension, we can know in advance that they are unpromising extensions and break out directly.

\begin{figure}[h]
    \centering
    \includegraphics[width=0.9\linewidth]{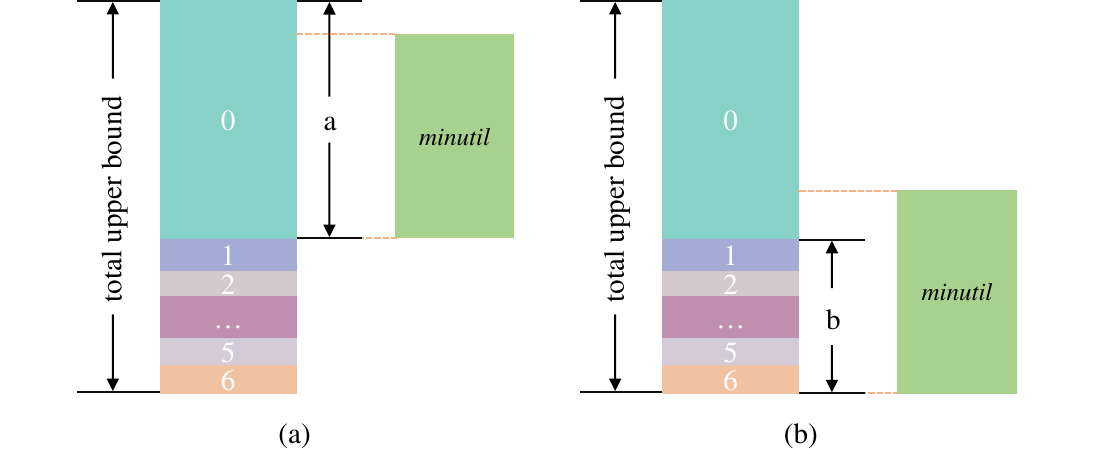}
    \caption{Situations of complement utility upper bound.}
    \label{CP}
\end{figure}

The above pruning strategies are related to utility, but in the area of rule mining, confidence is related to support. As a result, we can design a pruning strategy that takes support into account. Since UIRMiner stipulates that it will extend the right part after extending the left part, we can use the confidence pruning strategy \cite{zhang2022totally}. Because when a right extension is performed, the support of the antecedent is fixed. The support for IR will decrease due to this anti-monotonic property. Thus, we can take advantage of this property to prune the search space.

\begin{strategy}[Confidence pruning strategy]
    \label{Strategy:CPS}
    \rm Given an IR $r$, when $r$ implements a right expansion with a specific interval $E$ to generate the other IR $r^\prime$, the confidence value of $r^\prime$ is less than or equal to $r$'s confidence value, i.e., \textit{conf}($r^\prime$) $\le$ \textit{conf}($r$). If \textit{conf}($r$) $\textless$ \textit{minconf}, we have \textit{conf}($r^\prime$) $\textless$ \textit{minconf} too. Thus, we can stop extending further \cite{zhang2022totally}.
\end{strategy}

\subsection{Data structures}

% Inspired by utility-array \cite{gan2020fast},
In this part, several data structures are introduced: the E-sequence array, the antecedent utility-list, and the consequent utility-list.

\begin{definition}[E-sequence array]
    \label{seqArray}
    \rm Let $S$ = $\{$$E_1$, $E_2$, $\cdots$, $E_n$$\}$ be an E-sequence with size $n$. E-sequence array is composed of six sub-arrays, intervals, \textit{st}, \textit{ft}, utility, remaining utility (\textit{ru}), and the same start time position (\textit{sstp}). The intervals array saves all intervals that appear in $S$. The \textit{st} array records the corresponding start time of each interval. The \textit{ft} array records the finish time of each interval. The utility array contains information about the interval's utility. The remaining utility array records the sum of utilities after this interval. At last, the \textit{sstp} array keeps the position of the first same start time interval.
\end{definition}

\begin{figure}[h]
    \centering
    \includegraphics[width=0.9\linewidth]{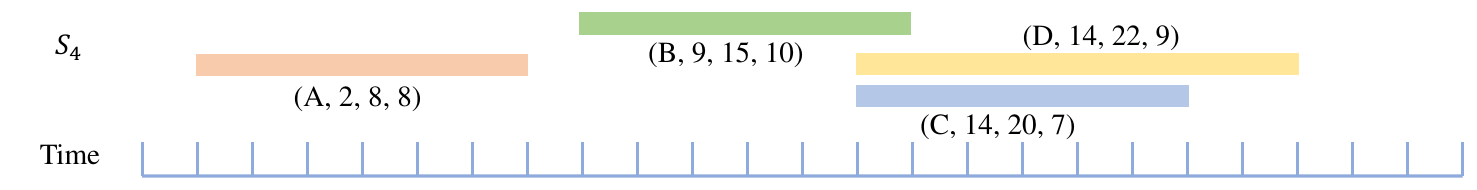}
    \caption{An illustration of the function of \textit{sstp}.}
    \label{sstp}
\end{figure}

Table \ref{seqArray} shows an E-sequence array of $S_4$ illustrated in Fig. \ref{sstp}. Since intervals $C$ and $D$ share the same start time, $D$'s \textit{sstp} is the position of interval $C$. The purpose of \textit{sstp} is to acquire the extendable intervals. Assuming there is a candidate IR $r$ = \{$A$\} $\rightarrow$ \{$D$\} with $\mathcal{R}$ $=$ \{\textbf{b}\}. The extendable interval of the left extension of $r$ is only $B$ excluding $C$ even though $C$'s position is less than $D$. Intervals $C$ and $D$ occur simultaneously, so we cannot use $C$ to predict $D$. This is why we can't extend $C$ and introduce \textit{sstp}.

\begin{table}[h]
	\centering
	\caption{\textit{E-sequence array} of $S_4$}
	\label{seqArray}
	\begin{tabular}{|c|c|c|c|c|c|}  
		\hline 
		\textbf{Interval} & \textbf{st} & \textbf{ft} & \textbf{utility} & \textbf{ru}& \textbf{sstp}\\
		\hline 
		\(A\) & 2 & 8 & 8 & 26 & 1\\ 
		\hline 
		\(B\) & 9 & 15 & 10 & 16 & 2\\
		\hline 
		\(C\) & 14 & 20 & 7 & 9 & 3\\
		\hline 
            \(D\) & 14 & 22 & 9 & 0 & 3\\
		\hline 
	\end{tabular}
\end{table}

\begin{definition}[antecedent utility-list]
    \label{AUL}
    \rm Let $r$ $=$ $X$ $\rightarrow$ $Y$ be an IR. We use \textit{AUL} to denote the antecedent utility-list of $r$. The \textit{AUL} is a triple with elements: E-sequence identifier (Esid), interval rule occurring (IRO), and the upper bound on utility (UB). Since we still need to perform the right extension, the upper bound on utility not only records \textit{LERSPEU} but also records \textit{RERSPEU}. Furthermore, if the IRO domain is false, UB is set to zero.
\end{definition}

Since there are some E-sequences that do not contain the consequent of an IR, \textit{AUL} will record all E-sequences that the antecedent occurs for calculating the support of the antecedent correctly. IRO is a boolean value used to mark E-sequences that contain the entire interval rule instead of just the antecedent. With the help of IRO we can conveniently calculate the support of the rule. In addition, the right extension follows the left extension, so the utility upper bound can record the sum of \textit{LERSPEU} and \textit{RERSPEU}. 

\begin{definition}[consequent utility-list]
    \label{CUL}
    \rm Let $r$ $=$ $X$ $\rightarrow$ $Y$ be an IR. We use \textit{CUL} to denote the consequent utility-list of $r$. The \textit{CUL} is a tuple with elements: an E-sequence identifier (Esid) and a utility upper bound (UB). In \textit{CUL}, UB only records \textit{RERSPEU}.
\end{definition}

\begin{table}[h]
	\centering
	\caption{\textit{AUL} of candidate interval rule $r_1$ in Table \ref{UIRs}}
	\label{AUL}
	\begin{tabular}{|c|c|}  
		\hline 
		\textbf{AUL} & $<$($S_1$, true, 34), ($S_2$, true, 14), ($S_3$, false, 0)$>$\\
		\hline 
	\end{tabular}
\end{table}

Table \ref{AUL} shows the \textit{AUL} of UIR $r_1$ in Table \ref{UIRs}. We can know that $r_1$ appears in two E-sequences, $S_1$ and $S_2$, but not in $S_3$. Thus, the IRO in the third triple is false. We can traverse \textit{AUL} to calculate the confidence of an IR and its UB for further extension. The structure of \textit{CUL} is similar to \textit{AUL}, and we will not show it in detail.

\subsection{The UIRMiner algorithm}

\begin{algorithm}[h]
    \small
    \caption{UIRMiner}
    \label{alg:IRMiner}
    \KwIn{$\mathcal{D}$: an interval database, \textit{minutil}: the minimum utility threshold, \textit{minconf}: the minimum confidence threshold.}
    \KwOut{UIRs: the set of all utility-driven interval rules.}
	
    initialize $\Sigma$ $\leftarrow$ $\emptyset$;
	
    calculate \textit{SEU}($E$) for each $E$ in $\mathcal{D}$ and update $\Sigma$;
	
    \While {\rm$\exists$ $E$ $\in$ $\Sigma$ and \textit{SEU}($E$) $\textless$ \textit{minutil}} {
	remove all unpromising intervals in $\Sigma$ and recalculate \textit{SEU};
    }
	
    scan $\mathcal{D}$ to construct E-seq-array, identify the set of antecedent with size 1: \textit{A};
	
    \For {$a$ $\in$ \textit{A}}{
	identify the set of intervals that can be consequent: \textit{C};\\
		\For {$c$ $\in$ \textit{C}} {
			generate IR $r$, compute \textit{u}($r$) and \textit{conf}($r$);\\
			construct \textit{AUL}($r$) and \textit{CUL}($r$);\\
			\If {\rm$u$($r$) $\ge$ \textit{minutil} and \textit{conf}($r$) $\ge$ \textit{minconf}}{
				update UIRs$\leftarrow$ UIRs$\cup$ $r$;
			}
			\If {\rm \textit{AUL($r$).UB} $\ge$ \textit{minutil}}{
				call \textbf{Expansion}($r$, \textit{AUL}($r$), true, 0);
			}
			\If{\rm\textit{conf}($r$) $\ge$ \textit{minconf} and \textit{CUL.UB}($r$) $\ge$ \textit{minutil}}{
		   		 call \textbf{Expansion}($r$, \textit{CUL}($r$), false, \textit{AUL}.\textit{length});
			}
		}
	}	
    \Return{\textit{UIRs}}\;
\end{algorithm}

\begin{algorithm}[h]
    \small
    \caption{Extension}
    \label{alg:leftExpansion}
    \KwIn{$r$: an IR, \textit{UL}($r$): AUL or CUL of $r$, flag: left extension flag, sup: the support of antecedent.}

    identify all intervals \textit{I} that can be extended into antecedent or consequent according to the flag;
    \For{$E$ $\in$ \textit{I}}{
        calculate complement utility upper bound \textit{U};\\
        \While{\rm\textit{U} $\ge$ \textit{minutil}} {
            \For{\rm $i$ $\in$ $0\sim6$} {
             calculate relation \textit{i}'s utility upper bound \textit{RUB};\\
                    \If{\textit{RUB} $\ge$ \textit{minutil}} {
                        extend $r$ with $E$ according to flag to form candidate IR set $R$;\\
                        \For{\rm$r^\prime$ $\in$ $R$} {
                            calculate its utility $u$($r^\prime$) and \textit{conf}($r^\prime$), construct its \textit{CUL};\\
                            \If{flag} {
                               construct its \textit{AUL}; \\
                            }
                            \If {\rm$u$($r^\prime$) $\ge$ \textit{minutil} and \textit{conf}($r^\prime$) $\ge$ \textit{minconf}}{
				                update UIRs$\leftarrow$ UIRs$\cup$ $r^\prime$;
			                 }
                            \If{flag and \rm \textit{AUL}($r^\prime$).\textit{UB} $\ge$ \textit{minutil}} {
                                call \textbf{Expansion}($r^\prime$, \textit{AUL}($r^\prime$), true, 0);
                            }
                            \If{\rm\textit{conf}($r^\prime$) $\ge$ \textit{minconf} and \textit{CUL}($r^\prime$).\textit{UB} $\ge$ \textit{minutil}} {
                                call \textbf{Expansion}($r^\prime$, \textit{CUL}($r^\prime$), false, \textit{AUL}.\textit{length});
                            }
                        }
                    } 
                    \textit{U} = \textit{U} - \textit{RUB};\\
                    \If{\textit{U} $\textless$ \textit{minutil}} {
                        break;
                    }
                }
            }
        }
\end{algorithm}

In Algorithm \ref{alg:IRMiner}, UIRMiner takes an interval database, the minimum utility threshold, and the minimum confidence threshold as its input. It outputs all utility-driven interval rules. UIRMiner initializes the $\Sigma$, computes \textit{SEU} of each interval, and removes the unpromising intervals (Lines 1-5). After that, UIRMiner constructs the E-seq array for each E-sequence and generates all antecedent intervals \textit{A} with size 1 (Line 6). Then, for each antecedent in \textit{A}, UIRMiner identifies the corresponding consequent interval set \textit{C} with size 1 (Lines 7-8). For each consequent in \textit{C}, the IR $r$ with size 1 $\ast$ 1 is generated. Then the corresponding data structures \textit{AUL}($r$) and \textit{CUL}($r$) are constructed (Lines 9-11). Subsequently, if the utility and the confidence of $r$ are both above the thresholds, UIRMiner will update the output rule set (Lines 12-14). If the \textit{AUL}($r$).\textit{UB} satisfies the \textit{minutil} condition, UIRMiner calls an Extension to extend the antecedent of $r$ (Lines 15-17). If both the confidence and \textit{CUL}($r$) satisfy \textit{minconf} and \textit{minutil} respectively, UIRMiner calls an extension to extend the consequent of $r$. Note that when we perform the right extension, the \textit{confidence pruning} strategy can be used (Lines 18-20).

Algorithm \ref{alg:leftExpansion} presents the extension of UIRMiner. It takes the candidate IR $r$ and its utility-list data structure \textit{UL}($r$) as a flag to represent whether a left extension is performed, and the antecedent support as the input. We should use the flag to signify the left extension since the right extension is always performed. Firstly, it confirms the interval set \textit{I} containing the intervals that can be extended into $r$ (Line 1). For each interval $E$ in \textit{I}, Extension calculates the complement utility upper bound \textit{U} according to the data structure of $r$ (Lines 2-3). The type of data structure depends on the value of the flag to specify the left extension or right extension. \textit{U} is the naive utility upper bound that does not consider the relation between the extending intervals $E$ and $r$. While \textit{U} is greater than or equal to \textit{minutil}, Extension extends $E$ in the relation order of 0 $\sim$ 6. Note that the relation is between $E$ and the last interval in the antecedent. Then, for a particular relation \textit{i}, leftExtension calculates its simple relation utility upper bound \textit{RUB} (Lines 4-6). If \textit{RUB} satisfies the minimum utility condition, Extension generates all candidate IRs, where the relation of each IR has been confirmed (Lines 7-8). For each specific IR $r^\prime$, the Extension procedure computes its utility and confidence and constructs its consequent utility-list, according to the value of the flag to construct the antecedent utility-list (Lines 9-13). If the utility and confidence of $r^\prime$ satisfy the condition simultaneously, leftExtension outputs $r^\prime$ (Lines 14-16). If the flag is true and \textit{AUL}($r^\prime$).\textit{UB} is not less than \textit{minutil}, the Extension is called for extending the antecedent further (Lines 17-19). If the confidence of $r^\prime$ is above \textit{minconf} and \textit{CUL}($r^\prime$).\textit{UB} exceeds \textit{minutil}, Extension calls for extending the consequent further with parameter flag: false (Lines 20-22). Then, Extension updates \textit{U}, and if \textit{U} is less than \textit{minutil}, it breaks out directly to avoid unnecessary extending (Lines 25-28). There are two reasons why extend an interval in the relation order of 0 $\sim$ 6. Firstly, the longer the IR, the more complex IR relations will be generated. We do not know the exact number of IRs when extending an interval. If we only consider the relation between the extending interval and the last interval in the antecedent or consequent, we can divide the IRs into 7 classes. Secondly, after we divide IRs into seven classes, the utility complement pruning strategy can be utilized to shrink the search space. We can filter many unpromising UIRs under the rough relation.

\subsection{Complexity analysis}

\textbf{Time complexity:} Let $|$D$|$ be the number of E-sequences in the database $\mathcal{D}$ and \textit{L} be the longest size of E-sequence in $\mathcal{D}$. The database scanning (Algorithm \ref{alg:IRMiner} lines 1-2) time complexity is $O$($|$D$|$$L$). Assume that there are $k$ hopeless intervals in the database and that the database is scanned at most $k$. Thus, the pre-processing time complexity is $O$($k$$|$D$|$$L$). Since the process of discovering the utility-driven interval rule is a deep search, the search space is a multi-fork tree, as shown in Fig. \ref{searchTree}. Therefore, the time complexity of the search stage is exponential. We can use \textit{O}($x^{h}$) to roughly represent it, where \textit{x} is greater than or equal to 1 and \textit{h} is the depth of search space. Usually, \textit{x} is greater than 1. When we do not use pruning strategies, completing the algorithm will take a lot of time. The function of pruning strategies is to try to reduce the value of \textit{x}. Although the time complexity is still exponential, the effect is tremendous. For relation calculation, the worst-case time complexity is $O$($L^2$) when there is little \textbf{b} relation between intervals. For constructing a data structure, there are most $|$D$|$ projected E-sequences, so its time complexity is $O$($|$D$|$). Finally, the total time complexity is $O$($k$$|$D$|$$L$ $+$ ($L^2$ $+$ $|$D$|$)$x^{h}$).

\textbf{Space complexity:} In many sequential rule mining algorithms \cite{huang2021us,zhang2022totally,zida2015efficient}, they prepare the sub-database of the next level-extending nodes, as shown in Fig. \ref{searchTree} (a). It will pose an exponential space complexity. In Fig. \ref{searchTree} (b), UIRMiner does not initialize the sub-database of the next-level nodes but identifies all extendable events. Instead, UIRMiner will construct the corresponding database when a node is extended. The space complexity of this mode will approximate linear complexity, i.e., \textit{O}(\textit{h}). The pruning strategies have little impact on the space complexity in this extending mode.

\begin{figure}[h]
    \centering
    \includegraphics[width=1\linewidth]{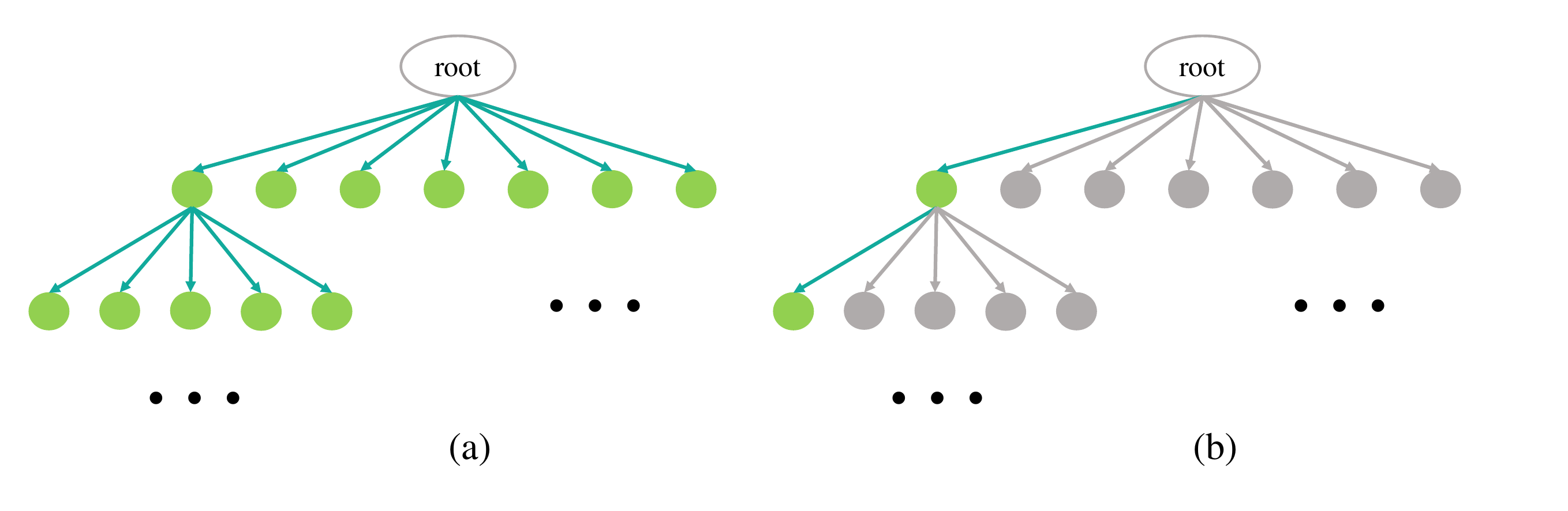}
    \caption{The general search space of UIRMiner.}
    \label{searchTree}
\end{figure}
\section{Experiments}   \label{sec:experiments}

In this section, we evaluate the effectiveness and efficiency of UIRMiner by conducting several experiments on both real-world and synthetic datasets. Pattern discovery has been widely used in many domains and applications \cite{fischer2020discovering,li2018truepie,preti2021maniacs}. To achieve this goal, we designed two variants of UIRMiner: UIRMiner$_{nsc}$ and UIRMiner$_{nc}$. 

\textbf{Parameter settings.} UIRMiner$_{nsc}$ uses all pruning strategies, but its relation to each IR is represented by a matrix, which means there is no storage compression for UIRMiner$_{nsc}$. UIRMiner$_{nc}$ is the algorithm that does not use the \textit{utility complement pruning} strategy to verify the effectiveness of this strategy. UIRMiner adopts all optimizations to demonstrate its performance. Furthermore, all experiments are conducted in Java and run on a personal computer with a Windows 10 system, an Intel Core i7-10700K processor, and 32 GB of main memory. The source code and datasets are available at GitHub https://github.com/DSI-Lab1/UIRMiner.

\begin{table}[htbp]
	\caption{Features of the datasets} 
	\label{datasets}
	\centering
	\begin{tabular}{|c|c|c|c|c|}
    	\hline
    	\textbf{Dataset} &  \textbf{\#E-seq}  &  \textbf{$\vert \textit{$\Sigma$} \vert$} & $\textit{avg}(\textit{e-Seq})$ & \textbf{\#Intervals}  \\
    	\hline 
            ASLBU1 & 873 & 216 & 16.95 & 14,802 \\ \hline
            ASLBU2 & 1,839 & 254 & 22.71 & 41,761 \\ \hline
            CONTEXT & 240 & 54 & 358.42 & 19,355 \\ \hline
            HEPATITIS & 498 & 63 & 855.89 & 53,921 \\ \hline
    	10k200E32L  & 10,000  & 200 & 31.49 & 314942 \\ \hline
            20k200E32L  & 20,000  & 200 & 31.47 & 629453\\ \hline
            40k200E32L  & 40,000  & 100 & 31.51 & 1260586\\ \hline
            50k200E32L  & 50,000  & 100 & 31.5 & 1574919\\ \hline
            60k200E32L  & 60,000  & 100 & 31.5 & 1889870\\ \hline
            70k200E32L  & 70,000  & 100 & 31.51 & 2205799\\ \hline
            80k200E32L  & 80,000  & 100 & 31.51 & 2520811\\ 
        \hline 
	\end{tabular}
\end{table}

\textbf{Datasets.} The details of the datasets used in this paper are listed in Table \ref{datasets}. The first four rows are real-world datasets, including ASLBU1, ASLBU2, CONTEXT, and HEPATITIS. ASLBU1 is generated from American Sign Language. ASLBU2 is its newer version. CONTEXT is the data from a mobile device of human activity. HEPATITIS is the dataset that records the E-sequences of patients suffering from either Hepatitis B or C. The utility of each interval in real-world datasets is generated from a Gaussian distribution like \cite{wang2016efficiently}, we also consider the interval's duration like the internal utility in \cite{yin2012uspan, wang2016efficiently, gan2020fast}. The remaining datasets are synthetic. In Table \ref{datasets}, \#E-seq signifies the number of E-sequences in the datasets, \textbf{$\vert$\textit{$\Sigma$}$\vert$} represents the number of different intervals of the dataset, $\textit{avg}(\textit{e-Seq})$ means the average intervals of each E-sequence, and \#Intervals denotes the total intervals in each dataset.

\textbf{Evaluation metrics.} For evaluating the proposed algorithm's performance, we will compare the well-known metrics: runtime consumption, the ability of pruning strategies, and memory usage. We also compare the storage consumption to verify the effectiveness of numerical encoding relation representation.

\subsection{Runtime evaluation}

The running time is a significant metric that evaluates the performance of the algorithm. To verify the efficiency of UIRMiner, in this part, we compare the running times of UIRMiner$_{nsc}$, UIRMiner$_{nc}$, and UIRMiner. Fig. \ref{runtime} illustrates the results of these three algorithms on both real-world and synthetic datasets. Particularly, we set the \textit{minconf} to 0.6 and the various \textit{minutil} to conduct the experiments. 

We can discover that in each dataset, UIRMiner takes the least time to complete the mining process, which means UIRMiner is the most efficient algorithm. For example, the result of the runtime on CONTEXT in Fig. \ref{runtime} (c) vividly shows that UIRMiner outperforms its other two versions. In Fig. \ref{runtime} (a), (b), and (d), although as the \textit{minutil} increases, the other two algorithms, UIRMiner$_{nsc}$ and UIRMiner$_{nc}$, can achieve similar results to UIRMiner. UIRMiner still consumes slightly less time than them. Especially in 10k200E32L and 20k200E32L, UIRMiner is obviously better than UIRMiner$_{nsc}$ and UIRMiner$_{nc}$. Since the number of E-sequences in the synthetic dataset is much larger than in the real-life dataset, the results are much more obvious.

To prove the utility complement pruning strategy is effective. We can compare the curves in each dataset of UIRMiner$_{nc}$ and UIRMiner. The most apparent comparison is in the synthetic datasets. In Fig. \ref{runtime} (e) and (f), we can observe that UIRMiner significantly outperforms UIRMiner$_{nc}$. In real-life datasets, the running time of UIRMiner is also better than UIRMiner$_{nc}$. Furthermore, from the results of UIRMiner$_{nsc}$ and UIRMiner, we can find out that the numerical encoding relation representation also can save time since the traditional relation determination needs to compute every two intervals' relations in an E-sequence, which is a $O$($n^2$) time complexity method. As the candidate IR grows, the later extended interval tends to form a \textbf{b} relation with the earlier intervals, which in numerical encoding relation representation does not take time to confirm. 

In summary, the abundant experiments shown above demonstrate that the proposed two optimizations, numerical encoding relation representation, and the utility complement pruning strategy can work well.

\begin{figure*}[htbp]
    \centering  % width=1\linewidth,
    \includegraphics[width=1\linewidth]{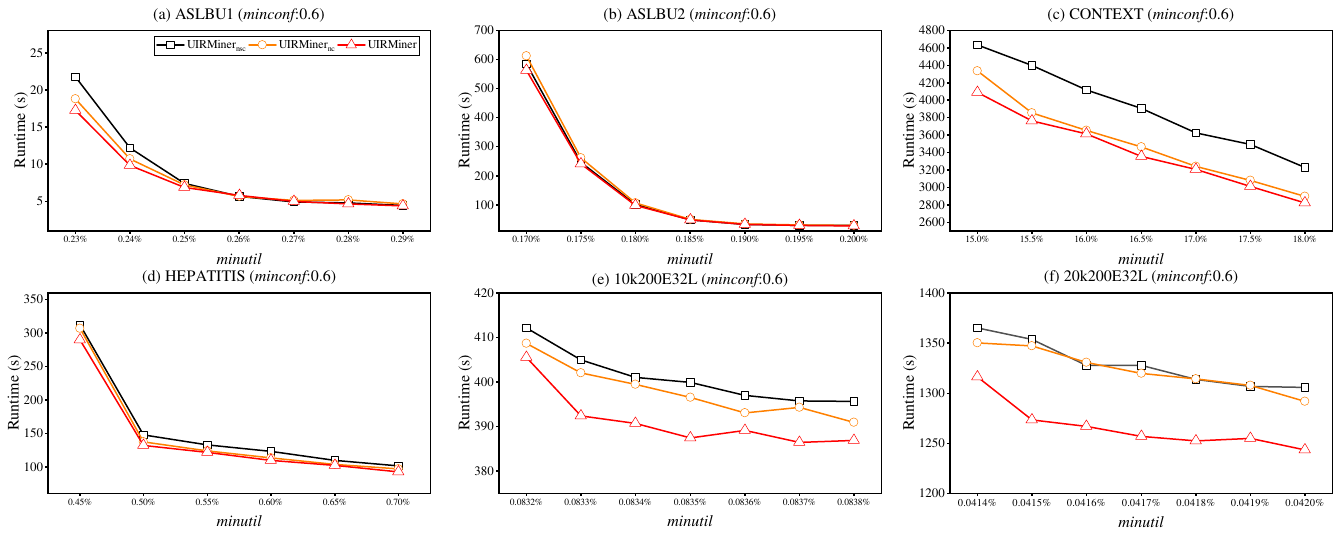}
    \caption{The execution time under various minimum utility thresholds.}
    \label{runtime}
\end{figure*}

\begin{figure*}[htbp]
    \centering  % width=1\linewidth,
    \includegraphics[width=1\linewidth]{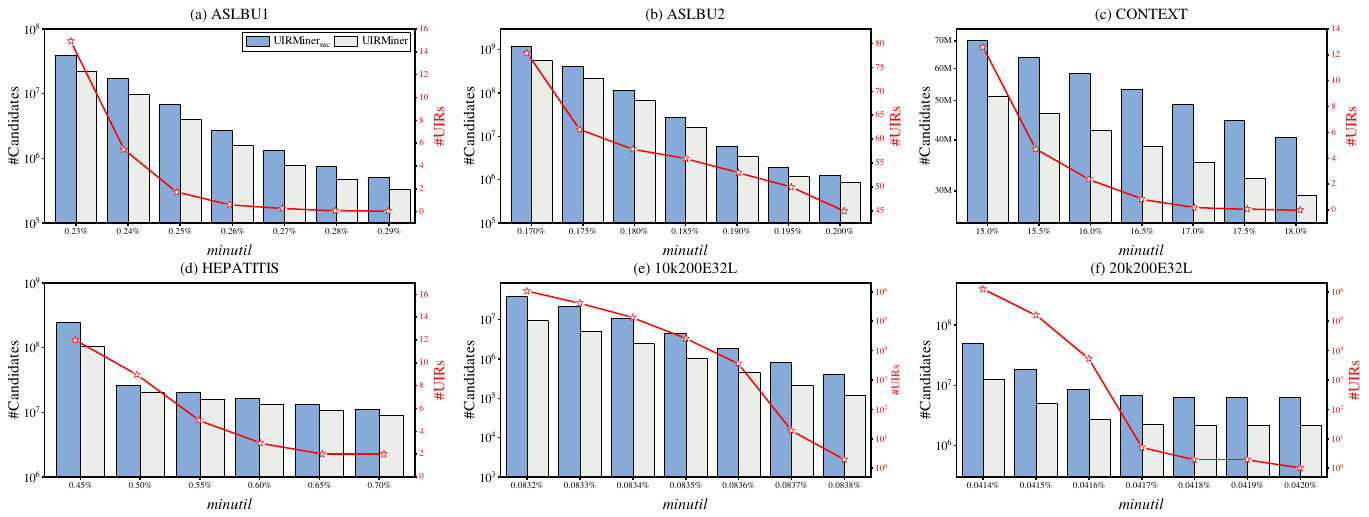}
    \caption{The candidates UIRs obtained by UIRMiner$_{nc}$ and UIRMiner.}
    \label{effectiveness}
\end{figure*}

\subsection{Pruning strategy comparison}

To validate the effectiveness of the proposed pruning strategies, we compare the number of candidate UIRs in this subsection. The number of candidates UIRs can be viewed as the search space of the algorithm, so the smaller the number of candidates, the smaller the search space, and the more efficient the pruning power used by the algorithm. We only use UIRMiner$_{nc}$ and UIRMiner to compare the effectiveness of the pruning strategy since UIRMiner$_{nsc}$ uses the same pruning strategy as UIRMiner. Fig. \ref{effectiveness} reveals the results of UIRMiner$_{nc}$ and UIRMiner.

From Fig. \ref{effectiveness}, we can know that in each dataset, the number of candidate UIRs will decrease as the \textit{minutil} increases. When the \textit{minutil} is small, the number of candidates produced by the two algorithms differs dramatically. When the \textit{minutil} becomes large enough, the number of candidates for the two algorithms is similar in datasets ASLBU1, ASLBU2, and HEPATITIS. But the number of candidates extracted by UIRMiner is always smaller than UIRMiner$_{nc}$. In Fig. \ref{effectiveness} (c), (e), and (f), the difference in the number of candidates generated by the two algorithms is consistently large. Especially in Fig. \ref{effectiveness} (f), no matter how \textit{minutil} changes, UIRMiner$_{nc}$ always produces several times more candidates than UIRMiner. For example, in Fig. \ref{effectiveness} (f), when the \textit{minutil} is 0.0414\%, the number of candidate UIRs of UIRMiner$_{nc}$ and UIRMiner is respective near to $2$ $\times$ $10^8$ and $7$ $\times$ $10^7$; when the \textit{minutil} is 0.042\%, and the number of candidate UIRs of UIRMiner$_{nc}$ and UIRMiner is respective near to $6$ $\times$ $10^6$ and $2$ $\times$ $10^6$. We can also find a similar result in Fig. \ref{runtime}. With a small \textit{minutil}, UIRMiner will take much less time to extract the UIRs; while in a large \textit{minutil}, UIRMiner and UIRMiner$_{nc}$ take nearly the same time to complete the discovery task in datasets ASLBU1, ASLBU2, and HEPATITIS. But in datasets CONTEXT, 10k200E32L, and 20k200E32L, there is always a large difference in execution time between UIRMiner and UIRMiner$_{nc}$.

In conclusion, combining the result of the number of candidates and the execution time of algorithms, we can prove that the pruning strategy, namely utility complement pruning, works well.

\subsection{Memory and storage analysis}

\begin{table*}[htbp]
    \centering
    \caption{Experiment results in CONTEXT} 
    \label{ESC}
    \scalebox{1} {
    \begin{tabular}{|c|c|ccccccc|}
    \hline
    CONTEXT & \textit{minutil} & 15\% & 15.5\% & 16\% & 16.5\% & 17\% & 17.5\% & 18\% \\  \hline 
            \multirow{2}{*}{Runtime (sec)} & UIRMiner$_{nsc}$ & 4629.342 & 4214.348 & 4123.856 & 3772.562 & 3661.318 & 3426.444 & 3242.622 \\
            & UIRMiner & 4144.028 & 3769.216 & 3548.719 & 3278.076 & 3089.511 & 2947.311 & 2698.941 \\
    	\hline 
            \multirow{2}{*}{Memory (MB)} & UIRMiner$_{nsc}$ & 319.84 & 318.94 & 318.22 & 317.71 & 316.89 & 316.33 & 315.75 \\
            & UIRMiner & 315.99 & 315.67 & 315.06 & 314.69 & 314.64 & 313.98 & 313.71 \\
    	\hline 
            \multirow{2}{*}{File size (B)} & UIRMiner$_{nsc}$ & 342206 & 131658 & 68862 & 25025 & 6088 & 3301 & 923 \\
            & UIRMiner & 219676 & 83907 & 43657 & 15936 & 3935 & 2160 & 689 \\
    \hline 
    \end{tabular}
 }
\end{table*}

\begin{figure*}[htbp]
    \centering
    \includegraphics[width=1\linewidth]{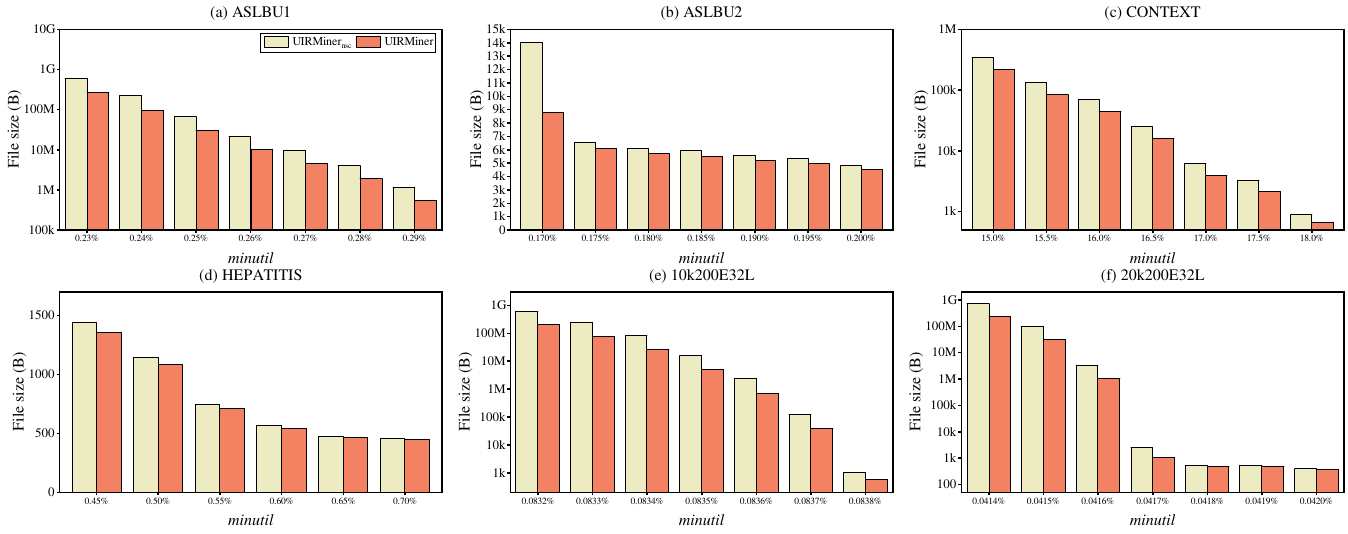}
    \caption{The output file size of the proposed methods under various minimum utility thresholds.}
    \label{memory}
\end{figure*}

\begin{figure*}[htbp]
    \centering
    \includegraphics[width=0.95\linewidth]{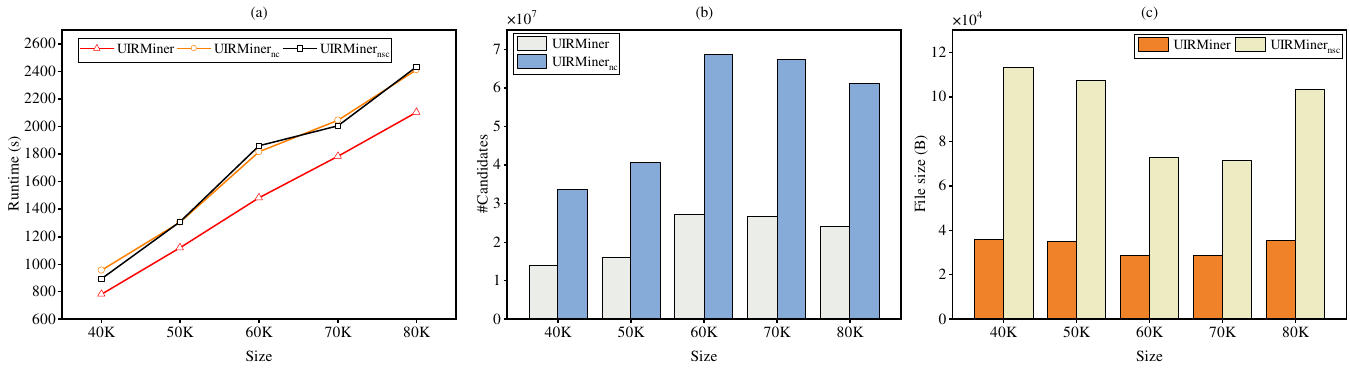}
    \caption{The scalability test between different methods.}
    \label{scalability}
\end{figure*}

In this subsection, we compare the memory usage and the size of the output file of different algorithms to demonstrate the effectiveness and efficiency of UIRMiner. In Table \ref{ESC}, the memory consumption of dataset CONTEXT is presented. We can find that the memory usage results of the different algorithms and various \textit{minutil} are almost the same. Thus, we do not present the other dataset's memory usage results. Fig. \ref{memory} depicts the output file size of the algorithms UIRMiner$_{nsc}$ and UIRMiner. Since UIRMiner$_{nc}$ uses the same relation representation, we do not compare the result of UIRMiner$_{nc}$ with UIRMiner. In ASLBU1, the number of UIRs obtained by the algorithm is large. Therefore, if we use the matrix relation representation, the storage of output files will take up a lot of disk space. However, the file size with a numerical encoding relation representation is half that of the traditional method. ASLBU2 is the newer version of ASLBU1. Thus, the file size of the numerical encoding relation representation is also less than that of the matrix representation. However, it won't be much less because the number of UIRs is small when \textit{minutil} becomes large. Fig. \ref{memory} (c) and (d) also follow this trend since the output UIRs are relatively small. Thus, the file size of the output result is almost the same, but the numerical encoding relation representation is always the smaller one. In the two synthetic datasets, when there is a small \textit{minutil} the output file size of UIRMiner$_{nsc}$ is much larger than UIRMiner. As the \textit{minutil} increases, the difference in output file size becomes smaller. However, the result from UIRMiner is still the lesser one. In summary, a numerical encoding relation representation is always better than the traditional relation representation since the former can save a lot of storage, which is quite friendly for devices with limited storage space, such as embedded devices.

\subsection{Scalability test}

To validate the algorithms' effectiveness on large datasets, we conducted several experiments on the size of each dataset, varying from 40k to 80k. To avoid choosing a suitable \textit{minutil}, we will discover the top 200 UIRs of each dataset according to utility.

Fig. \ref{scalability} shows the results of the compared algorithms, which involve execution time, candidate UIRs, and storage usage. With the size of the dataset boosted, their execution time increases linearly. In Fig. \ref{scalability} (a), we can find that UIRMiner in any size of a dataset outperforms the comparators, i.e., UIRMiner$_{nc}$ and UIRMiner$_{nsc}$. We can discover that although UIRMiner$_{nsc}$ uses the \textit{complement pruning strategy}, its runtime is similar to UIRMiner$_{nc}$. It reveals that the numerical encoding relation representation can reduce the execution time. Comparing the results of UIRMiner and UIRMiner$_{nc}$, we can derive that the complement pruning strategy works well. In Fig. \ref{scalability} (b), the number of candidate UIRs generated from UIRMiner and UIRMiner$_{nc}$ verifies, while UIRMiner is better than UIRMiner$_{nc}$. The number of candidate UIRs can be viewed as the search space of the algorithm. Thus, the fewer candidates there are, the less execution time there will be. We also compared the storage usage between UIRMiner and UIRMiner$_{nsc}$ in Fig. \ref{scalability} (c). From the result, we can validate that the numerical encoding relation representation can effectively reduce the storage space of the file. In summary, the proposed algorithm, UIRMiner, can effectively cope with large-scale datasets.
\section{Conclusion}   \label{sec:conclusion}

In this paper, we define the problem of discovering utility-driven interval rule mining and propose UIRMiner, which is the first algorithm to extract all utility-driven interval rules from a given interval database. To save storage of the output file, we first introduced the numerical encoding relation representation to substitute the traditional matrix relation representation. In addition, we incorporate the complicated relation between intervals and the utility upper bound to create a \textit{utility complement pruning} strategy to reduce the search space. To maintain the candidate UIRs, we use two data structures, namely \textit{AUL} and \textit{CUL}. The abundant experimental results conducted on both real and synthetic datasets show the proposed algorithm is better than the methods that do not use any optimization.

In the future, we will look forward to applying UIRMiner to real-life scenarios, such as service recommendation and server optimization analysis. Target rule mining is also an interesting research field that combines target mining and rule mining. It can derive rules for desired targets, which can then be better used for recommendations. Besides, we want to use UIRMiner to research the area of anomaly detection using interval rules according to rare high-risk rules \cite{gan2021anomaly}.
%%%%%%%%%%%%%%%%%%%%%%%%%%%%%%%%%%%%%%%

% use section* for acknowledgment
\ifCLASSOPTIONcompsoc
  % The Computer Society usually uses the plural form
  \section*{Acknowledgments}
\else
  % regular IEEE prefers the singular form
  \section*{Acknowledgment}
\fi

This work was supported in part by the National Natural Science Foundation of China (Nos. 62002136 and 62272196), Natural Science Foundation of Guangdong Province of China (Nos. 2020A1515010970 and 2022A1515011861), and Shenzhen Research Council (Nos. JCYJ20200109113427092 and GJHZ20180928155209705).

% Can use something like this to put references on a page
% by themselves when using endfloat and the captionsoff option.
\ifCLASSOPTIONcaptionsoff
  \newpage
\fi

\bibliographystyle{IEEEtran}
% argument is your BibTeX string definitions and bibliography database(s)
\bibliography{UIRMiner.bib}

% Generated by IEEEtran.bst, version: 1.14 (2015/08/26)
\begin{thebibliography}{10}
\providecommand{\url}[1]{#1}
\csname url@samestyle\endcsname
\providecommand{\newblock}{\relax}
\providecommand{\bibinfo}[2]{#2}
\providecommand{\BIBentrySTDinterwordspacing}{\spaceskip=0pt\relax}
\providecommand{\BIBentryALTinterwordstretchfactor}{4}
\providecommand{\BIBentryALTinterwordspacing}{\spaceskip=\fontdimen2\font plus
\BIBentryALTinterwordstretchfactor\fontdimen3\font minus
  \fontdimen4\font\relax}
\providecommand{\BIBforeignlanguage}[2]{{%
\expandafter\ifx\csname l@#1\endcsname\relax
\typeout{** WARNING: IEEEtran.bst: No hyphenation pattern has been}%
\typeout{** loaded for the language `#1'. Using the pattern for}%
\typeout{** the default language instead.}%
\else
\language=\csname l@#1\endcsname
\fi
#2}}
\providecommand{\BIBdecl}{\relax}
\BIBdecl

\bibitem{agrawal1995mining}
R.~Agrawal and R.~Srikant, ``Mining sequential patterns,'' in \emph{The
  Eleventh International Conference on Data Engineering}.\hskip 1em plus 0.5em
  minus 0.4em\relax IEEE, 1995, pp. 3--14.

\bibitem{fournier2017surveys}
P.~Fournier-Viger, J.~C.-W. Lin, R.~U. Kiran, Y.~S. Koh, and R.~Thomas, ``A
  survey of sequential pattern mining,'' \emph{Data Science and Pattern
  Recognition}, vol.~1, no.~1, pp. 54--77, 2017.

\bibitem{gan2019survey}
W.~Gan, J.~C.-W. Lin, P.~Fournier-Viger, H.-C. Chao, and P.~S. Yu, ``A survey
  of parallel sequential pattern mining,'' \emph{ACM Transactions on Knowledge
  Discovery from Data}, vol.~13, no.~3, pp. 1--34, 2019.

\bibitem{2004Mining}
J.~Pei, J.~Han, B.~Mortazavi-Asl, J.~Wang, H.~Pinto, Q.~Chen, U.~Dayal, and
  M.~C. Hsu, ``Mining sequential patterns by pattern-growth: the prefixspan
  approach,'' \emph{IEEE Transactions on Knowledge and Data Engineering},
  vol.~16, no.~11, pp. 1424--1440, 2004.

\bibitem{srikant1996mining}
R.~Srikant and R.~Agrawal, ``Mining sequential patterns: Generalizations and
  performance improvements,'' in \emph{International Conference on Extending
  Database Technology}.\hskip 1em plus 0.5em minus 0.4em\relax Springer, 1996,
  pp. 1--17.

\bibitem{wang2007frequent}
J.~Wang, J.~Han, and C.~Li, ``Frequent closed sequence mining without candidate
  maintenance,'' \emph{IEEE Transactions on Knowledge and Data Engineering},
  vol.~19, no.~8, pp. 1042--1056, 2007.

\bibitem{fournier2012using}
P.~Fournier-Viger, T.~Gueniche, and V.~S. Tseng, ``Using partially-ordered
  sequential rules to generate more accurate sequence prediction,'' in
  \emph{International Conference on Advanced Data Mining and
  Applications}.\hskip 1em plus 0.5em minus 0.4em\relax Springer, 2012, pp.
  431--442.

\bibitem{fournier2015mining}
P.~Fournier-Viger, C.-W. Wu, V.~S. Tseng, L.~Cao, and R.~Nkambou, ``Mining
  partially-ordered sequential rules common to multiple sequences,'' \emph{IEEE
  Transactions on Knowledge and Data Engineering}, vol.~27, no.~8, pp.
  2203--2216, 2015.

\bibitem{fournier2012cmrules}
P.~Fournier-Viger, U.~Faghihi, R.~Nkambou, and E.~M. Nguifo, ``{CMRules}:
  Mining sequential rules common to several sequences,'' \emph{Knowledge-Based
  Systems}, vol.~25, no.~1, pp. 63--76, 2012.

\bibitem{zaki2001spade}
M.~J. Zaki, ``{SPADE}: An efficient algorithm for mining frequent sequences,''
  \emph{Machine Learning}, vol.~42, no.~1, pp. 31--60, 2001.

\bibitem{ccelebi2014alarm}
{\"O}.~{\c{C}}elebi, E.~Zeydan, {\.I}.~Ar{\i}, {\"O}.~{\.I}leri, and
  S.~Erg{\"u}t, ``Alarm sequence rule mining extended with a time confidence
  parameter,'' in \emph{IEEE International Conference on Data Mining}, 2014.

\bibitem{han2013mining}
M.~Han, Z.~Wang, and J.~Yuan, ``Mining constraint-based sequential patterns and
  rules on restaurant recommendation system,'' \emph{Journal of Computational
  Information Systems}, vol.~9, no.~10, pp. 3901--3908, 2013.

\bibitem{allen1983maintaining}
J.~F. Allen, ``Maintaining knowledge about temporal intervals,''
  \emph{Communications of the ACM}, vol.~26, no.~11, pp. 832--843, 1983.

\bibitem{kam2000discovering}
P.-s. Kam and A.~W.-C. Fu, ``Discovering temporal patterns for interval-based
  events,'' in \emph{International Conference on Data Warehousing and Knowledge
  Discovery}.\hskip 1em plus 0.5em minus 0.4em\relax Springer, 2000, pp.
  317--326.

\bibitem{khoshnevisan2018recent}
F.~Khoshnevisan, J.~Ivy, M.~Capan, R.~Arnold, J.~Huddleston, and M.~Chi,
  ``Recent temporal pattern mining for septic shock early prediction,'' in
  \emph{IEEE International Conference on Healthcare Informatics}.\hskip 1em
  plus 0.5em minus 0.4em\relax IEEE, 2018, pp. 229--240.

\bibitem{lee2020mining}
Z.~Lee, J.~Rebane, and P.~Papapetrou, ``Mining disproportional frequent
  arrangements of event intervals for investigating adverse drug events,'' in
  \emph{IEEE 33rd International Symposium on Computer-Based Medical
  Systems}.\hskip 1em plus 0.5em minus 0.4em\relax IEEE, 2020, pp. 289--292.

\bibitem{novitski2020all}
P.~Novitski, C.~M. Cohen, A.~Karasik, V.~Shalev, G.~Hodik, and R.~Moskovitch,
  ``All-cause mortality prediction in t2d patients,'' in \emph{International
  Conference on Artificial Intelligence in Medicine}.\hskip 1em plus 0.5em
  minus 0.4em\relax Springer, 2020, pp. 3--13.

\bibitem{sharma2018stipa}
A.~K. Sharma and D.~Patel, ``{Stipa}: A memory efficient technique for interval
  pattern discovery,'' in \emph{IEEE International Conference on Big
  Data}.\hskip 1em plus 0.5em minus 0.4em\relax IEEE, 2018, pp. 1767--1776.

\bibitem{xie2018mining}
T.~Xie, Q.~Zheng, and W.~Zhang, ``Mining temporal characteristics of behaviors
  from interval events in e-learning,'' \emph{Information Sciences}, vol. 447,
  pp. 169--185, 2018.

\bibitem{hoppner2001learning}
F.~H{\"o}ppner, ``Learning temporal rules from state sequences,'' in
  \emph{IJCAI Workshop on Learning from Temporal and Spatial Data},
  vol.~25.\hskip 1em plus 0.5em minus 0.4em\relax Citeseer, 2001.

\bibitem{agrawal1994fast}
R.~Agrawal, R.~Srikant \emph{et~al.}, ``Fast algorithms for mining association
  rules,'' in \emph{The 20th International Conference on very Large Data
  Bases}, vol. 1215.\hskip 1em plus 0.5em minus 0.4em\relax Santiago, Chile,
  1994, pp. 487--499.

\bibitem{winarko2007armada}
E.~Winarko and J.~F. Roddick, ``Armada--an algorithm for discovering richer
  relative temporal association rules from interval-based data,'' \emph{Data \&
  Knowledge Engineering}, vol.~63, no.~1, pp. 76--90, 2007.

\bibitem{lin2002fast}
M.-Y. Lin and S.-Y. Lee, ``Fast discovery of sequential patterns by memory
  indexing,'' in \emph{The 4th International Conference on Data Warehousing and
  Knowledge Discovery}.\hskip 1em plus 0.5em minus 0.4em\relax Springer, 2002,
  pp. 150--160.

\bibitem{huang2019mining}
J.-W. Huang, B.~P. Jaysawal, K.-Y. Chen, and Y.-B. Wu, ``Mining frequent and
  top-k high utility time interval-based events with duration patterns,''
  \emph{Knowledge and Information Systems}, vol.~61, no.~3, pp. 1331--1359,
  2019.

\bibitem{wang2020mining}
J.-Z. Wang, Y.-C. Chen, W.-Y. Shih, L.~Yang, Y.-S. Liu, and J.-L. Huang,
  ``Mining high-utility temporal patterns on time interval--based data,''
  \emph{ACM Transactions on Intelligent Systems and Technology}, vol.~11,
  no.~4, pp. 1--31, 2020.

\bibitem{mirbagheri2021mining}
S.~M. Mirbagheri and H.~J. Hamilton, ``Mining high utility patterns in
  interval-based event sequences,'' \emph{Data \& Knowledge Engineering}, vol.
  135, p. 101924, 2021.

\bibitem{hoppner2001finding}
F.~H{\"o}ppner and F.~Klawonn, ``Finding informative rules in interval
  sequences,'' in \emph{International Symposium on Intelligent Data
  Analysis}.\hskip 1em plus 0.5em minus 0.4em\relax Springer, 2001, pp.
  125--134.

\bibitem{zida2015efficient}
S.~Zida, P.~Fournier-Viger, C.-W. Wu, J.~C.-W. Lin, and V.~S. Tseng,
  ``Efficient mining of high-utility sequential rules,'' in \emph{International
  Workshop on Machine Learning and Data Mining in Pattern Recognition}.\hskip
  1em plus 0.5em minus 0.4em\relax Springer, 2015, pp. 157--171.

\bibitem{huang2021us}
G.~Huang, W.~Gan, J.~Weng, and P.~S. Yu, ``{US-Rule}: Discovering
  utility-driven sequential rules,'' \emph{ACM Transactions on Knowledge
  Discovery from Data}, pp. 1--21, 2021.

\bibitem{zhang2022totally}
C.~Zhang, M.~Lyu, W.~Gan, and P.~S. Yu, ``Totally-ordered sequential rules for
  utility maximization,'' \emph{arXiv preprint arXiv:2209.13501}, 2022.

\bibitem{wu2007mining}
S.-Y. Wu and Y.-L. Chen, ``Mining nonambiguous temporal patterns for
  interval-based events,'' \emph{IEEE Transactions on Knowledge and Data
  Engineering}, vol.~19, no.~6, pp. 742--758, 2007.

\bibitem{chen2015mining}
Y.-C. Chen, W.-C. Peng, and S.-Y. Lee, ``Mining temporal patterns in time
  interval-based data,'' \emph{IEEE Transactions on Knowledge and Data
  Engineering}, vol.~27, no.~12, pp. 3318--3331, 2015.

\bibitem{moskovitch2015fast}
R.~Moskovitch and Y.~Shahar, ``Fast time intervals mining using the
  transitivity of temporal relations,'' \emph{Knowledge and Information
  Systems}, vol.~42, no.~1, pp. 21--48, 2015.

\bibitem{lee2020z}
Z.~Lee, T.~Lindgren, and P.~Papapetrou, ``{Z-miner}: an efficient method for
  mining frequent arrangements of event intervals,'' in \emph{The 26th ACM
  SIGKDD International Conference on Knowledge Discovery and Data Mining},
  2020, pp. 524--534.

\bibitem{mirbagheri2020high}
S.~M. Mirbagheri and H.~J. Hamilton, ``High-utility interval-based sequences,''
  in \emph{The 22nd International Conference on Big Data Analytics and
  Knowledge Discovery}.\hskip 1em plus 0.5em minus 0.4em\relax Springer, 2020,
  pp. 107--121.

\bibitem{hoppner2001discovery}
F.~H{\"o}ppner, ``Discovery of temporal patterns,'' in \emph{European
  Conference on Principles of Data Mining and Knowledge Discovery}.\hskip 1em
  plus 0.5em minus 0.4em\relax Springer, 2001, pp. 192--203.

\bibitem{gan2020proum}
W.~Gan, J.~C.-W. Lin, J.~Zhang, H.-C. Chao, H.~Fujita, and P.~S. Yu, ``{ProUM}:
  Projection-based utility mining on sequence data,'' \emph{Information
  Sciences}, vol. 513, pp. 222--240, 2020.

\bibitem{gan2020fast}
W.~Gan, J.~C.-W. Lin, J.~Zhang, P.~Fournier-Viger, H.-C. Chao, and P.~S. Yu,
  ``Fast utility mining on sequence data,'' \emph{IEEE Transactions on
  Cybernetics}, vol.~51, no.~2, pp. 487--500, 2020.

\bibitem{wang2016efficiently}
J.-Z. Wang, J.-L. Huang, and Y.-C. Chen, ``On efficiently mining high utility
  sequential patterns,'' \emph{Knowledge and Information Systems}, vol.~49,
  no.~2, pp. 597--627, 2016.

\bibitem{yin2012uspan}
J.~Yin, Z.~Zheng, and L.~Cao, ``{USpan}: an efficient algorithm for mining high
  utility sequential patterns,'' in \emph{The 18th ACM SIGKDD International
  Conference on Knowledge Discovery and Data Mining}, 2012, pp. 660--668.

\bibitem{gan2021survey}
W.~Gan, J.~C.-W. Lin, P.~Fournier-Viger, H.-C. Chao, V.~S. Tseng, and P.~S. Yu,
  ``A survey of utility-oriented pattern mining,'' \emph{IEEE Transactions on
  Knowledge and Data Engineering}, vol.~33, no.~4, pp. 1306--1327, 2021.

\bibitem{tran2020statistically}
T.~Q. Tran, K.~Fukuchi, Y.~Akimoto, and J.~Sakuma, ``Statistically significant
  pattern mining with ordinal utility,'' in \emph{The 26th ACM SIGKDD
  International Conference on Knowledge Discovery \& Data Mining}, 2020, pp.
  1645--1655.

\bibitem{shie2011mining}
B.-E. Shie, H.-F. Hsiao, V.~S. Tseng, and P.~S. Yu, ``Mining high utility
  mobile sequential patterns in mobile commerce environments,'' in
  \emph{International Conference on Database Systems for Advanced
  Applications}.\hskip 1em plus 0.5em minus 0.4em\relax Springer, 2011, pp.
  224--238.

\bibitem{ahmed2010mining}
C.~F. Ahmed, S.~K. Tanbeer, and B.-S. Jeong, ``Mining high utility web access
  sequences in dynamic web log data,'' in \emph{11th ACIS International
  Conference on Software Engineering, Artificial Intelligence, Networking and
  Parallel/Distributed Computing}.\hskip 1em plus 0.5em minus 0.4em\relax IEEE,
  2010, pp. 76--81.

\bibitem{zhang2022tusq}
C.~Zhang, Q.~Dai, Z.~Du, W.~Gan, J.~Weng, and P.~S. Yu, ``{TUSQ}: Targeted
  high-utility sequence querying,'' \emph{IEEE Transactions on Big Data},
  vol.~9, no.~2, pp. 512--527, 2022.

\bibitem{zhang2021shelf}
C.~Zhang, Z.~Du, Y.~Yang, W.~Gan, and P.~S. Yu, ``On-shelf utility mining of
  sequence data,'' \emph{ACM Transactions on Knowledge Discovery from Data},
  vol.~16, no.~2, pp. 1--31, 2021.

\bibitem{gan2021explainable}
W.~Gan, Z.~Du, W.~Ding, C.~Zhang, and H.-C. Chao, ``Explainable fuzzy utility
  mining on sequences,'' \emph{IEEE Transactions on Fuzzy Systems}, vol.~29,
  no.~12, pp. 3620--3634, 2021.

\bibitem{gan2021anomaly}
W.~Gan, L.~Chen, S.~Wan, J.~Chen, and C.-M. Chen, ``Anomaly rule detection in
  sequence data,'' \emph{IEEE Transactions on Knowledge and Data Engineering},
  pp. 1--14, 2021.

\bibitem{segura2022mining}
A.~Segura-Delgado, A.~Anguita-Ruiz, R.~Alcal{\'a}, and J.~Alcal{\'a}-Fdez,
  ``Mining high average-utility sequential rules to identify high-utility gene
  expression sequences in longitudinal human studies,'' \emph{Expert Systems
  with Applications}, vol. 193, p. 116411, 2022.

\bibitem{fischer2020discovering}
J.~Fischer and J.~Vreeken, ``Discovering succinct pattern sets expressing
  co-occurrence and mutual exclusivity,'' in \emph{The 26th ACM SIGKDD
  Conference on Knowledge Discovery \& Data Mining}, 2020, pp. 813--823.

\bibitem{li2018truepie}
Q.~Li, M.~Jiang, X.~Zhang, M.~Qu, T.~P. Hanratty, J.~Gao, and J.~Han,
  ``{TruePIE}: Discovering reliable patterns in pattern-based information
  extraction,'' in \emph{The 24th ACM SIGKDD International Conference on
  Knowledge Discovery and Data Mining}, 2018, pp. 1675--1684.

\bibitem{preti2021maniacs}
G.~Preti, G.~De~Francisci~Morales, and M.~Riondato, ``Maniacs: Approximate
  mining of frequent subgraph patterns through sampling,'' in \emph{The 27th
  ACM SIGKDD International Conference on Knowledge Discovery and Data Mining},
  2021, pp. 1348--1358.

\end{thebibliography}

% that's all folks
\end{document}